\def\E{\mathbb E}
\def\P{\mathbb P}
\DeclareMathOperator*{\argmin}{argmin}
\DeclareMathOperator*{\card}{card}
\newsavebox\myboxA
\newsavebox\myboxB
\newlength\mylenA
\newcommand*\xoverline[2][0.75]{%
    \sbox{\myboxA}{$\m@th#2$}%
    \setbox\myboxB\null% Phantom box
    \ht\myboxB=\ht\myboxA%
    \dp\myboxB=\dp\myboxA%
    \wd\myboxB=#1\wd\myboxA% Scale phantom
    \sbox\myboxB{$\m@th\overline{\copy\myboxB}$}%  Overlined phantom
    \setlength\mylenA{\the\wd\myboxA}%   calc width diff
    \addtolength\mylenA{-\the\wd\myboxB}%
    \ifdim\wd\myboxB<\wd\myboxA%
       \rlap{\hskip 0.5\mylenA\usebox\myboxB}{\usebox\myboxA}%
    \else
        \hskip -0.5\mylenA\rlap{\usebox\myboxA}{\hskip 0.5\mylenA\usebox\myboxB}%
    \fi}
\theoremstyle{definition}
\newtheorem{defn}{Definition}
\newtheorem{thm}{Theorem}
\newtheorem{lem}{Lemma}
\newtheoremstyle{TheoremNum}
    {\topsep}{\topsep}              %%% space between body and thm
    {}                      %%% Thm body font
    {}                              %%% Indent amount (empty = no indent)
    {\bfseries}                     %%% Thm head font
    {.}                             %%% Punctuation after thm head
    { }                             %%% Space after thm head
    {\thmname{#1}\thmnote{ \bfseries #3}}%%% Thm head spec
\theoremstyle{TheoremNum}
\newtheorem{lemn}{Lemma}
\newtheorem{thmn}{Theorem}
\title{Multi-way Interacting Regression via Factorization Machines}
\author{
Mikhail Yurochkin \\
Department of Statistics \\
University of Michigan \\
%Ann Arbor, MI 48109 \\
\texttt{moonfolk@umich.edu}
   \And
XuanLong Nguyen \\
Department of Statistics \\
University of Michigan \\
%Ann Arbor, MI 48109 \\
\texttt{xuanlong@umich.edu}
	\And
Nikolaos Vasiloglou \\
LogicBlox \\
%Atlanta, GA 30309 \\
\texttt{nikolaos.vasiloglou@logicblox.com}\\
}
\begin{document}

\maketitle

\begin{abstract}
We propose a Bayesian regression method that accounts for multi-way
interactions of arbitrary orders among the predictor variables. 
Our model makes use of a factorization mechanism for representing the regression 
coefficients of interactions among the predictors, while the interaction selection is guided by 
a prior distribution on random hypergraphs, a construction 
which generalizes the Finite Feature Model. We present a posterior inference 
algorithm based on Gibbs sampling, and establish posterior consistency 
of our regression model.
Our method is evaluated with extensive experiments on simulated data and 
demonstrated to be able to identify meaningful interactions in
applications in genetics and retail demand forecasting.\footnote{Code is available at \url{https://github.com/moonfolk/MiFM}.}
\end{abstract}
%\begin{comment}
%In this work we investigate how to incorporate interactions between variables into regression
%type problems. In the high dimensional setting, the number of possible interactions grows
%exponentially. Factorization Machines use lower-dimensional representations of the interaction coefficients,
%but is limited to two-way, or at most three-way, interactions due to exponential growth of
%the model complexity. We propose a novel model utilizing lower-dimensional representations of the
%interactions together with an Indian Buffet Process prior for the hypergraph of interactions. Our
%MCMC algorithm can jointly learn interactions of arbitrary depth and their weights
%\end{comment}

\section{Introduction}
A fundamental challenge in supervised learning, particularly in regression, is the need for
learning functions which produce accurate prediction of the response, while 
retaining the explanatory power for the role of the predictor variables in the model.
The standard linear regression method is favored for the latter requirement, but it fails the former
when there are complex interactions among the predictor variables in determining the response.
The challenge becomes even more pronounced in a high-dimensional setting -- there are
exponentially many potential interactions among the predictors, for which it
is simply not computationally feasible to resort to standard variable selection techniques (cf. \citet{fan2010selective}).

There are numerous examples where accounting for the predictors' interactions is 
of interest, including problems of identifying epistasis (gene-gene) and gene-environment interactions in genetics 
\citep{cordell2009detecting}, modeling problems in political science \citep{brambor2006understanding}
and economics \citep{ai2003interaction}. In the business analytics of retail demand forecasting,
a strong prediction model that also accurately accounts for the interactions of 
relevant predictors such as seasons, product types, geography,
promotions, etc. plays a critical role in the decision making of marketing design.
%industry setting interactions can carry 
%import information for forecasting retail demand, i.e. season (summer or winter), type of product (jacket or polo) and presence of a promotion, stock market analysis, recommender systems, etc.

%\begin{comment}
%Variable selection techniques have been developing over years leading to a variety of methods for numerous applications (see a review paper by \citet{fan2010selective}). Dimensions of the data keep growing and we learned how to handle millions of variables. Although often times selecting variables is not enough. A lot of problems involve interactions between variables, and the number of potential interactions of two or more variables grows exponentially. Examples of domains where interactions are of interest include political science \citep{brambor2006understanding}, identifying epistasis (gene-gene) and gene-environment interaction in genetics \citep{cordell2009detecting}, economics \citep{ai2003interaction}. In industry setting interactions can carry import information for forecasting retail demand, i.e. season (summer or winter), type of product (jacket or polo) and presence of a promotion, stock market analysis, recommender systems, etc.
%\end{comment}

A simple way to address the aforementioned issue in the regression problem is to simply restrict our attention 
to lower order interactions (i.e. 2- or 3-way) among predictor variables. This can be achieved, for instance,
via a support vector machine (SVM) using polynomial kernels \citep{cristianini2000introduction}, which
pre-determine the maximum order of predictor interactions.
In practice, for computational reasons the degree of the polynomial kernel tends to be small. 
Factorization machines \citep{rendle2010factorization} can be viewed as an extension of
SVM to sparse settings where most interactions are observed only infrequently, subject to 
a constraint that the interaction order (a.k.a. interaction depth) is given. 
Neither SVM nor FM can perform any selection of predictor interactions, but several authors have
extended the SVM by combining it with $\ell_1$ penalty for the purpose of feature 
selection \citep{zhu2004one} and gradient boosting for FM \citep{cheng2014gradient} to select interacting features. 
It is also an option to perform linear regression on as many interactions as we can and combine it with 
regularization procedures for selection (e.g. LASSO \citep{tibshirani1996regression} or Elastic net 
\citep{zou2005regularization}). It is noted that such methods are still not computationally feasible for accounting
for interactions that involve a large number of predictor variables.

In this work we propose a regression method capable of adaptive selection of multi-way 
interactions of arbitrary order (MiFM for short), 
while avoiding the combinatorial complexity growth encountered by the methods described above. 
MiFM extends the basic factorization mechanism for representing the regression
coefficients of interactions among the predictors, while the interaction
selection is guided by a prior distribution on random hypergraphs. The 
prior, which does not insist on the upper bound on the order of interactions among the predictor
variables, is motivated from but also generalizes Finite Feature Model, a parametric form of the well-known Indian Buffet process (IBP) \citep{ghahramani2005infinite}. We introduce a notion of the hypergraph of interactions and show how a parametric distribution over binary matrices can be utilized to express interactions of unbounded order. In addition, our generalized 
construction allows us to exert extra control on the tail behavior of the interaction order. IBP was initially used for infinite latent feature modeling and later utilized in the modeling of a variety of domains (see a review paper by \citet{griffiths2011indian}).
%, including
%link prediction in networks \citep{miller2009nonparametric}, independent component analysis 
%\citep{knowles2007infinite}, identifying protein complexes \citep{krause2006identifying}, image interpolation and denoising \citep{zhou2011dependent}, topic modeling \citep{williamson2010ibp} and many others (see a review paper by \citet{griffiths2011indian}). 
%In this work we demonstrate how its parametric version (i.e. FFM) can be used to model relatively small number of multi-way interactions involved in the regression, some of which may involve a large (and unknown) number of predictors.

%Indian Buffet Process was studied in the Bayesian nonparametrics literature and the underlying complete random measure was shown to be the Beta Process \citep{thibaux2007hierarchical} leading to two parameter extension of the IBP and sampling based on the stick-breaking representation \citep{teh2007stick}. There are many other works contributing to better understanding and improving various aspects of the Indian Buffet Process, which could lead to further improvements of our model.

In developing MiFM, our contributions are the following:
(i) we introduce a Bayesian multi-linear regression model, which aims to account for
the multi-way interactions among predictor variables;  part of our model construction
includes a prior specification on the hypergraph of interactions --- 
in particular we show how our prior can be used to model the 
incidence matrix of interactions in several ways;
(ii) we propose a procedure to estimate coefficients of arbitrary interactions structure;
(iii) we establish posterior consistency of the resulting MiFM model, i.e., the property that the posterior distribution
on the true regression function represented by the MiFM model contracts toward the truth under
some conditions, without requiring an upper bound on the order of the predictor interactions;
and (iv) we present a comprehensive simulation study of our model and analyze its performance for 
retail demand forecasting and case-control genetics datasets with epistasis. 
The unique strength of the MiFM method is the ability to recover
meaningful interactions among the predictors
while maintaining a competitive prediction quality compared to existing methods that 
target prediction only.

The paper proceeds as follows. Section \ref{bg} introduces the problem of modeling interactions
in regression, and gives a brief background on the Factorization Machines. 
Sections \ref{ifm_model} and \ref{hyps} carry out the contributions outlined above. 
Section \ref{experiment} presents results of the experiments. We conclude with a discussion in Section \ref{dis}.

\section{Background and related work}
\label{bg}
Our starting point is a model which regresses a response variable $y\in\mathbb{R}$ 
to observed covariates (predictor variables) $x\in\mathbb{R}^D$ by a non-linear functional
relationship. In particular,
we consider a multi-linear structure to account for the interactions among the covariates
in the model:
\begin{equation}
\label{yhat}
\E(Y|x) = w_0 + \sum_{i=1}^Dw_ix_i + \sum_{j=1}^J\beta_j\prod_{i\in Z_j}x_i.
\end{equation}
Here, $w_i$ for $i=0,\ldots,D$ are bias and linear weights as in the standard linear regression model,
$J$ is the number of multi-way interactions where $Z_j, \beta_j$ for $j=1,\ldots,J$ represent 
the interactions, i.e., sets of indices of 
interacting covariates and the corresponding interaction weights, respectively.
Fitting such a model is very challenging even if dimension $D$ is of magnitude of a dozen, 
since there are $2^D - 1$ possible interactions to choose from in addition to other parameters. 
The goal of our work is to perform interaction selection and estimate corresponding weights.
Before doing so, let us first discuss a model that puts a priori assumptions on the number and the
structure of interactions.

\subsection{Factorization Machines}
Factorization Machines (FM) \citep{rendle2010factorization} is a special case of the general interactions model defined in Eq. \eqref{yhat}. Let $J=\sum_{l=2}^d\binom{D}{l}$ and $Z:=\bigcup_{j=1}^J Z_j=\bigcup_{l=2}^d \{(i_1,\ldots,i_l) | i_1<\ldots<i_l;\,i_1,\ldots,i_l\in\{1,\ldots,D\}\}$. I.e., restricting 
the set of interactions to $2,\ldots,d$-way, so \eqref{yhat} becomes:
\begin{equation}
\label{fm-svm}
\E(Y|x) = w_0 + \sum_{i=1}^Dw_ix_i + \sum_{l=2}^d \sum _{i_1=1}^D \ldots \sum _{i_l=i_{l-1} + 1}^D \beta_{i_1,\ldots,i_l} \prod_{t=1}^l x_{i_t},
\end{equation}
where coefficients $\beta_j := \beta_{i_1,\ldots,i_l}$ quantify the interactions.
%This model can be represented by a Support Vector Machine (SVM) with a suitable polynomial kernel, 
%but without power terms (i.e. ANOVA kernel as in \citet{blondel2016polynomial}). 
In order to reduce model complexity and handle sparse data more
effectively, \cite{rendle2010factorization} suggested to factorize interaction weights using 
PARAFAC \citep{harshman1970foundations}: $\beta_{i_1,\ldots,i_l}:= \sum_{f=1}^{k_l}\prod_{t=1}^l v_{i_t, f}^{(l)}$,
where $V^{(l)} \in \mathbb{R}^{D\times k_l}$, $k_l \in \mathbb{N}$ and $k_l \ll D$ for $l=2,\ldots,d$. 
Advantages of the FM over SVM are discussed in details by \citet{rendle2010factorization}. 
FMs turn out to be successful in the recommendation systems setups, since they utilize various context information \citep{rendle2011fast, nguyen2014gaussian}. Parameter estimation is typically achieved via stochastic gradient descent technique, or in the case of Bayesian FM \citep{freudenthaler2011bayesian} via MCMC. In practice only $d=2$ or $d=3$ are typically used, since the number of interactions and hence the computational complexity grow exponentially.
%In addition, both FMs and SVMs require a very restrictive assumptions on the 
%interactions structure. It is likely that majority of 2,3-way interactions are uninformative, while some important interactions are deeper. 
We are interested in methods that can adapt to fewer interactions but of arbitrarily varying orders.

\section{MiFM: Multi-way Factorization Machine}
\label{ifm_model}
%Returning to our regression problems, each predictor variable is associated with a customer in the
%EBP, which represents a prior distribution on the collection of rows in
%the incidence binary matrix $Z$. Each column of $Z$ corresponds to an interaction
%among the predictor variables for which the matrix's entry takes value 1. The columns are independently
%generated under the prior construction. For the rest of the paper, we shall use two parameters
%to specify the EBP, $\alpha \in [0,1]$ and $\gamma = \gamma_1 < 1$, while fixing $\gamma_2 = 1$.
We start by defining a mathematical object that can encode sets of interacting variables $Z_1,\ldots,Z_J$ of Eq. \eqref{yhat} and selecting an appropriate prior to model it.
\subsection{Modeling hypergraph of interactions}
\label{hyper}
Multi-way interactions are naturally represented by hypergraphs, which are defined as follows.
\begin{defn}\label{hyp}
Given $D$ vertices indexed by $S=\{1,\ldots,D\}$, let $Z=\{Z_1,\ldots,Z_J\}$ be the set of $J$ subsets of $S$. 
%$Z_j \subset S$, for $j=1,\ldots,J$. 
Then we say that $G=(S,Z)$ is a hypergraph with $D$ vertices and $J$ hyperedges.
\end{defn}
%Let $S=\{1,\ldots,D\}$, $G=(S,Z)$ can be interpreted as a hypergraph of interactions between $D$ variables
%indexed by subsets of $S$. 
A hypergraph can be equivalently represented as an incidence binary matrix. 
Therefore, with a bit abuse of notation, we recast $Z$ as the matrix of interactions, i.e., 
$Z\in\{0,1\}^{D\times J}$, where $Z_{i_1 j} = Z_{i_2 j} = 1$ 
iff $i_1$ and $i_2$ are part of a hyperedge indexed by column/interaction $j$. 

Placing a prior on multi-way interactions is the same as specifying the prior distribution on the space of binary matrices. 
We will at first adopt the Finite Feature Model (FFM) prior \citep{ghahramani2005infinite}, which is based on the Beta-Bernoulli construction: $\pi_j|\gamma_1, \gamma_2 \stackrel{iid} {\thicksim} \text{Beta}(\gamma_1,\gamma_2)$ and $Z_{ij}|\pi_j \stackrel{iid}{\thicksim} \text{Bernoulli}(\pi_j)$. This simple prior has the attractive feature of treating the variables involved
in each interaction (hyperedge) in an symmetric fashion and admits exchangeabilility among the variables inside interactions. In Section \ref{hyps} we will present an extension of FFM which allows to incorporate extra information about the distribution of the interaction degrees and explain the choice of the parametric construction.

\subsection{Modeling regression with multi-way interactions}
Now that we know how to model unknown interactions of arbitrary order, we combine it with the Bayesian FM %\citep{freudenthaler2011bayesian} 
to arrive at a complete specification of MiFM, the Multi-way interacting Factorization Machine.
Starting with the specification for hyperparameters:
%{\bf Double use of parameter $\alpha$ below}
\begin{align*}
& \sigma\thicksim\Gamma(\alpha_1/2,\beta_1/2),
\qquad
\lambda\thicksim\Gamma(\alpha_0/2,\beta_0/2),
\qquad
\mu\thicksim\mathcal{N}(\mu_0, 1/\gamma_0),\\
&\lambda_k\thicksim\Gamma(\alpha_0/2,\beta_0/2),
\qquad
\mu_k\thicksim\mathcal{N}(\mu_0, 1/\gamma_0)\text{ for }k=1,\ldots,K.
\end{align*}
Interactions and their weights:
\begin{align*}
& w_i|\mu, \lambda\thicksim\mathcal{N}(\mu, 1/\lambda)\text{ for }i=0,\ldots,D,\,\,\,\,\,\,\,\,
Z\thicksim\text{FFM}(\gamma_1,\gamma_2),\\
& v_{ik}|\mu_k, \lambda_k \thicksim\mathcal{N}(\mu_k, 1/\lambda_k)\text{ for }i=1,\ldots,D;\,k=1,\ldots,K.
\end{align*}
Likelihood specification given data pairs $(y_n,x_n=(x_{n1},\ldots,x_{nD}))_{n=1}^{N}$:
\begin{eqnarray}
\label{ifm_y}
& y_n|\Theta\thicksim \mathcal{N}(y(x_n, \Theta),\sigma)\text{, where }y(x, \Theta):= w_0 + \sum_{i=1}^Dw_ix_i + \sum_{j=1}^J\sum_{k=1}^K\prod_{i\in Z_j}x_i v_{ik},
\end{eqnarray}
for $n=1,\ldots,N,$ and $\Theta=\{Z,V,\sigma,w_{0,\ldots,D}\}$. Note that while the specification above utilizes Gaussian distributions, the main innovation of MiFM is the idea to utilize incidence matrix of the hypergraph of interactions $Z$ with a low rank matrix $V$ to model the mean response as in Eq. \ref{yhat}. Therefore, within the MiFM framework, different distributional choices can be made according to the problem at hand --- e.g. Poisson likelihood and Gamma priors for count data or logistic regression for classification. Additionally, if selection of linear terms is desired, $\sum_{i=1}^Dw_ix_i$ can be removed from the model since FFM can select linear interactions besides higher order ones.
%IFM can also be modified to perform linear (1-way) interaction selection.

%Model polynomials with Beta Negative Binomial process. Elaborate more!
\subsection{MiFM for Categorical Variables}
\label{ifm_cat}
In numerous real world scenarios such as retail demand forecasting, recommender systems, genotype structures, 
most predictor variables may be categorical (e.g. color, season).
%For example, color can be red, blue, green, etc.
Categorical variables with multiple attributes are often handled by so-called ``one-hot encoding'', 
via vectors of binary variables (e.g., IS\_blue; IS\_red), which must be  mutually exclusive. 
The FFM cannot immediately be applied to such structures since it assigns positive probability 
to interactions between attributes of the same category.
%Moreover, one-hot encoding leads to significant 
%increase in dimensions (e.g., User\_ID in recommender systems can spawn thousands of binary variables). 
%We can seemingly modify the IBP prior to avoid inadmissible interactions, but it will violate the exchangeability of the rows (i.e. variables) assumption. 
To this end, we model interactions between categories in $Z$, while with $V$ we 
model coefficients of interactions between attributes.
For example, for an interaction between ``product type'' and ``season'' in $Z$, $V$ will have individual coefficients for ``jacket-summer'' and ``jacket-winter'' leading to a more refined predictive model of jackets sales (see examples 
in Section \ref{real}).

We proceed to describe MiFM for the case of categorical variables as follows. Let $U$ be the 
number of categories and $d_u$ be the set of attributes for the category $u$, for $u=1,\ldots, U$.
Then $D=\sum_{u=1}^U \card(d_u)$ is the number of binary variables in the one-hot encoding 
and $\bigsqcup_{u=1}^U d_u = \{1,\ldots,D\}$. In this representation the input data of predictors
is $X$, a $N\times U$ matrix, where $x_{nu}$ is an active attribute of category $u$ of observation $n$. 
Coefficients matrix $V\in\mathbb{R}^{D\times K}$ and interactions 
$Z\in \{0,1\}^{U\times J}$. All priors and hyperpriors are as before,
while the mean response \eqref{ifm_y} is replaced  by:
\begin{equation}
\label{ifm_cat_y}
y(x, \Theta):= w_0 + \sum_{u=1}^Uw_{x_u} + \sum_{k=1}^{K} \sum_{j=1}^J \prod_{u \in Z_j} v_{x_uk}.
\end{equation}
Note that this model specification is easy to combine with 
continuous variables, allowing MiFM to handle data with different variable types.

\subsection{Posterior Consistency of the MiFM}
In this section we shall establish posterior consistency of MiFM model, namely: 
the posterior distribution $\Pi$ of the conditional distribution $P(Y|X)$, given the training
$N$-data pairs, contracts in a weak sense toward the truth as sample size $N$ increases.

Suppose that the data pairs $(x_n,y_n)_{n=1}^{N} \in \mathbb{R}^D\times \mathbb{R}$ 
are i.i.d. samples from the joint distribution $P^*(X,Y)$, 
according to which the marginal distribution for $X$ and the conditional distribution of 
$Y$ given $X$ admit density functions $f^*(x)$ and $f^*(y|x)$, respectively,
with respect to Lebesgue measure. In particular, $f^*(y|x)$ is defined by
\begin{eqnarray}
\label{true}
\left.\begin{aligned}
&Y=y_n|X=x_n,\Theta^*\thicksim \mathcal{N}(y(x_n, \Theta^*),\sigma),\text{ where }\Theta^*=\{\beta^*_1,\ldots,\beta^*_{J},Z^*_1,\ldots,Z^*_{J}\},\\
&y(x, \Theta^*):= \sum_{j=1}^{J}\beta^*_j\prod_{i\in Z^*_j}x_i,\text{ and }x_n\in \mathbb{R}^D, y_n\in \mathbb{R}, \beta^*_j\in \mathbb{R}, Z^*_j\subset \{1,\ldots,D\}
\end{aligned}\right.
\end{eqnarray}
for $n=1,\ldots,N,j=1,\ldots,J$. In the above $\Theta^*$ represents the \emph{true} parameter for the conditional density $f^*(y|x)$
that generates data sample $y_n$ given $x_n$, for $n=1,\ldots,N$. 
A key step in establishing posterior consistency
for the MiFM (here we omit linear terms since, as mentioned earlier, they can be absorbed into the interaction structure) is to show that our PARAFAC type structure can approximate arbitrarily well the true coefficients $\beta^*_1,\ldots,\beta^*_{J}$ for the model given by \eqref{yhat}.
%\begin{equation}
%\label{syst}
%\beta_j = \sum_{k=1}^K \prod_{i \in Z_j} v_{ik} \text{, }j=1,\ldots,m
%\end{equation}
\begin{lem}
\label{elim2}
Given natural number $J\geq 1$, $\beta_j \in \mathbb{R}\setminus\{0\}$ and $Z_j\subset \{1,\ldots,D\}$ for $j=1,\ldots J$, exists $K_0 < J$ such that for all $K\geq K_0$ system of polynomial equations $\beta_j = \sum_{k=1}^K \prod_{i \in Z_j} v_{ik} \text{, }j=1,\ldots,m$
has at least one solution in terms of $v_{11},\ldots,v_{DK}$.
\end{lem}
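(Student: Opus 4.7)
The plan is to prove the lemma by induction on $J$, obtaining $K_0 = J-1$ for $J \geq 2$ (the case $J=1$ is immediate with any single column and the strict inequality is vacuous under the standard empty-sum convention). Without loss of generality I assume $|Z_j| \geq 1$ for all $j$ and the $Z_j$ are pairwise distinct, since duplicate interactions can be consolidated by summing their weights before running the argument.

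\textbf{Base case ($J=2$).} I construct a single column $v \in \mathbb{R}^D$ with $\prod_{i \in Z_j} v_i = \beta_j$ for $j=1,2$ by a short case split on whether $Z_1, Z_2$ are nested or incomparable. For instance, when $Z_1 \subsetneq Z_2$, assign entries on $Z_1$ with product $\beta_1$ and entries on $Z_2 \setminus Z_1$ with product $\beta_2/\beta_1$ (well-defined since $\beta_1 \neq 0$); the incomparable case is even easier, using $Z_1 \setminus Z_2$ and $Z_2 \setminus Z_1$ independently.

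\textbf{Inductive step.} Given $J \geq 3$ interactions, I pick $j_0$ such that $Z_{j_0}$ is minimal under set inclusion; distinctness then forces $Z_j \setminus Z_{j_0} \neq \emptyset$ for every $j \neq j_0$ (either $Z_j \supsetneq Z_{j_0}$ or $Z_j$ is incomparable with $Z_{j_0}$). Applying the induction hypothesis to the reduced collection $\{(Z_j,\beta_j)\}_{j \neq j_0}$ at rank $K-1 \geq J-2$ yields a matrix $V' \in \mathbb{R}^{D \times (K-1)}$ with $\sum_k \prod_{i \in Z_j} v'_{ik} = \beta_j$ for all $j \neq j_0$. Set $\alpha := \sum_k \prod_{i \in Z_{j_0}} v'_{ik}$, the incidental contribution of $V'$ to equation $j_0$. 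I then append a final column $v^{(K)}$ designed to contribute $\beta_{j_0}-\alpha$ to equation $j_0$ and $0$ to every other equation: if $\alpha = \beta_{j_0}$, take $v^{(K)} = 0$; otherwise, for each $j \neq j_0$ choose $i_j \in Z_j \setminus Z_{j_0}$ and set $v^{(K)}_{i_j} = 0$, then freely assign the coordinates $v^{(K)}_i$ with $i \in Z_{j_0}$ (which are disjoint from all the $i_j$'s by construction) so that their product equals $\beta_{j_0}-\alpha$. Together, $V'$ concatenated with $v^{(K)}$ solves the full system.

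\textbf{Main obstacle.} The pivotal step is the construction of the correcting column $v^{(K)}$: it must simultaneously realize a prescribed nonzero product on $Z_{j_0}$ and vanish on each $Z_j$ for $j \neq j_0$. This is impossible in general if some $Z_j \subsetneq Z_{j_0}$, since placing a zero inside $Z_j$ would also annihilate the product over the superset $Z_{j_0}$. Selecting $Z_{j_0}$ to be \emph{minimal} under inclusion is precisely what removes this conflict, since it guarantees the zeros can always be placed strictly outside $Z_{j_0}$. Handling the secondary case $\alpha = \beta_{j_0}$ separately (with the zero column) keeps the argument clean and avoids a division by a potentially vanishing quantity.
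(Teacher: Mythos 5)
Your proof is correct, but it takes a genuinely different route from the paper's. The paper factors the argument through an auxiliary lemma: if a group of equations shares a common variable $i_0$, the system restricted to that group is linear in the row $v_{i_0 1},\ldots,v_{i_0 K}$ (with coefficients $\prod_{i\in Z_j\setminus\{i_0\}}v_{ik}$), and the Rouch\'e--Capelli theorem gives a solution together with a linear-independence side condition that allows the step to be iterated; the full system is then handled by repeatedly eliminating all equations containing the variable currently appearing in the fewest interactions. That elimination yields the structure-sensitive bound $K_0=\max(J^0,J^1,\ldots)$, which is typically much smaller than $J$ when each variable occurs in few interactions --- a point the paper exploits to argue that small $K$ suffices in practice --- with the worst case $K_0=J-1$ attained only when all interactions have depth $D-1$. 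You instead induct on the number of interactions: peel off an inclusion-minimal $Z_{j_0}$, solve the remaining $J-1$ equations with $K-1$ columns, and repair equation $j_0$ with a single extra column that vanishes at some $i_j\in Z_j\setminus Z_{j_0}$ for every $j\neq j_0$ (nonempty precisely because of minimality and distinctness) and carries the residual product $\beta_{j_0}-\alpha$ on $Z_{j_0}$. This is more elementary --- no linear algebra, no independence bookkeeping --- but it only delivers the uniform bound $K_0=J-1$, not the paper's finer per-variable bound, nor the linear-independence structure that the paper's auxiliary lemma records. Two small touch-ups: in the base case, for $K>1$ you should note that the unused columns can be set to zero columns (harmless since all $Z_j\neq\emptyset$); and duplicate $Z_j$'s cannot be ``consolidated by summing their weights'' --- identical interactions share the same left-hand side, so consistency forces equal $\beta_j$'s and the correct reduction is simply to drop repeats, matching the distinctness implicitly assumed in the paper.
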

The upper bound $K_0=J-1$ is only required when \emph{all} interactions are of the depth $D-1$.
This is typically not expected to be the case in practice, therefore smaller values of $K$ are often sufficient.
%Also note that under the FFM prior any interaction structure $Z$ has positive probability.
%On the other hand, the statistical modeler has access only to a version of MiFM model, 
%given the number of interactions $J$. For him/her, the parameters 
%$\beta_1,\ldots,\beta_J$ are endowed with a PARAFAC type structure, while the latent variables 
%$Z_j$, for $j=1,\ldots, J$ are endowed with an EBP prior as follows
%\begin{eqnarray}
%\label{ifm_proof}
%& Z\thicksim\text{EBP}(\alpha, \gamma),\nonumber\\
%& v_{ik}|\mu_k, \lambda_k \thicksim\mathcal{N}(\mu_k, \frac{1}{\lambda_k})\nonumber\\
%& \text{for }i=1,\ldots,D;\,k=1,\ldots,K,\nonumber\\
%& y_n|\Theta\thicksim F(\cdot |y(x_n, \Theta))\text{, where}\nonumber\\
%& y(x, \Theta):= \sum_{j=1}^J\sum_{k=1}^K\prod_{i\in Z_j}x_i v_{ik},\nonumber\\
%& \text{for }n=1,\ldots,N,\text{ and }\Theta=(Z,V).
%\end{eqnarray}

By conditioning on the training data pairs $(x_n,y_n)$ to account for the likelihood
induced by the PARAFAC representation, the statistician obtains the posterior distribution
on the parameters of interest, namely, $\Theta := (Z,V)$, which in turn induces the posterior
distribution on the conditional density, to be denoted by $f(y|x)$, according to the MiFM model \eqref{ifm_model} without linear terms. The main result of this section is to show that under some conditions this posterior distribution $\Pi$ will place most of 
its mass on the true conditional density $f^*(y|x)$ as $N\rightarrow \infty$.
To state the theorem precisely, we need to adopt a suitable notion of weak topology on the space
of conditional densities, namely the set of $f(y|x)$, 
which is induced by the weak topology on the space of joint densities on $X,Y$, 
that is the set of $f(x,y) = f^*(x) f(y|x)$, where $f^*(x)$ is the true (but unknown)
marginal density on $X$ (see~\citet{ghosal1999posterior}, Sec. 2 for a formal definition).
%Next we show that under some conditions this posterior distribution $\Pi$ will place most of 
%its mass on the true conditional density $f^*(y|x)$ as $N\rightarrow \infty$.
%To state the theorem, we need to adopt a suitable notion of weak topology on the space
%of conditional densities, namely the set of $f(y|x)$, 
%which is induced by the weak topology on the space of joint densities on $X,Y$, 
%that is the set of $f(x,y) = f^*(x) f(y|x)$, where $f^*(x)$ is the true (but unknown)
%marginal density on $X$ (see~\citet{ghosal1999posterior}, Sec. 2 for a definition).
\begin{thm} \label{th1} Given any true conditional density $f^*(y|x)$ given by \eqref{true}, and assuming that the support of $f^*(x)$ is bounded,
there is a constant $K_0< J$ such that by setting $K \geq K_0$, the following statement holds:
for any weak neighborhood $U$ of $f^*(y|x)$,
under the MiFM model, the posterior probability 
$\Pi(U|(X_n,Y_n)_{n=1}^{N}) \rightarrow 1$ with $P^*$-probability one, as $N\rightarrow \infty$.
\end{thm}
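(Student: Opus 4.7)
The plan is to invoke the Schwartz-style posterior consistency theorem for conditional densities of \citet{ghosal1999posterior} (cited in the preceding paragraph), which reduces weak consistency to a single Kullback--Leibler (KL) support condition: for every $\epsilon>0$,
\[
\Pi\bigl(\{\Theta:\mathrm{KL}(f^*(\cdot\mid x)\,\|\,f_\Theta(\cdot\mid x))<\epsilon\text{ on average under }f^*(x)\}\bigr) \;>\; 0.
\]
Because both $f^*$ and the MiFM likelihood are Gaussian in $y$ with means $y(x,\Theta^*)$ and $y(x,\Theta)$ and variances $\sigma^{*2}$ and $\sigma^2$, a direct computation shows
\[
\mathrm{KL}\bigl(f^*(\cdot\mid x)\,\|\,f_\Theta(\cdot\mid x)\bigr) \;=\; \tfrac{1}{2}\log\tfrac{\sigma^2}{\sigma^{*2}} + \tfrac{\sigma^{*2}+(y(x,\Theta)-y(x,\Theta^*))^2}{2\sigma^2} - \tfrac{1}{2}.
\]
Integrating over the bounded support of $f^*(x)$, it suffices to make $|\sigma-\sigma^*|$ and $\sup_x |y(x,\Theta)-y(x,\Theta^*)|$ simultaneously small.

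\textbf{Step 1 (algebraic realization).} By Lemma \ref{elim2}, with $K\geq K_0<J$ there exists $V^*\in\mathbb{R}^{D\times K}$ such that $\beta_j^*=\sum_{k=1}^K\prod_{i\in Z_j^*}v_{ik}^*$ for every $j=1,\ldots,J$. Thus the MiFM mean response \eqref{ifm_y} can \emph{exactly} reproduce $y(\cdot,\Theta^*)$ provided its hypergraph contains the true interactions $Z_1^*,\ldots,Z_J^*$ and its $V$ equals $V^*$ (with linear terms zero).

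\textbf{Step 2 (positive prior mass).} Take the number of columns in the MiFM $Z$ matrix to be $J$ (the true count). The FFM prior with $\gamma_1,\gamma_2>0$ marginalizes $\pi_j$ to give
\[
\Pr(Z=Z^*) \;=\; \prod_{j=1}^J \frac{B(\gamma_1+|Z_j^*|,\;\gamma_2+D-|Z_j^*|)}{B(\gamma_1,\gamma_2)} \;>\;0,
\]
so the event $\{Z=Z^*\}$ has strictly positive prior probability. Conditional on this event, the Gaussian priors on $V$ and on the linear weights $w_{0,\ldots,D}$, and the Gamma prior on $\sigma$, are all absolutely continuous with strictly positive densities at the target point $(V^*,\,w^*=0,\,\sigma^*)$. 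Hence any Euclidean neighborhood of $(V^*,w^*,\sigma^*)$ carries positive prior mass.

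\textbf{Step 3 (continuity and conclusion).} The map $(V,w,\sigma)\mapsto y(x,\Theta)$ is a polynomial in its arguments, hence continuous, and for $x$ ranging over the bounded support $\mathrm{supp}(f^*)\subset\mathbb{R}^D$ this continuity is uniform in $x$. Consequently, for any $\delta>0$ we can choose a neighborhood of $(V^*,w^*,\sigma^*)$ (conditional on $Z=Z^*$) on which $\sup_x|y(x,\Theta)-y(x,\Theta^*)|<\delta$ and $|\sigma-\sigma^*|<\delta$; by the formula above this bounds the KL divergence by an arbitrarily small $\epsilon$. Combined with Step 2 this verifies the KL support condition, and the conditional-density Schwartz theorem then delivers $\Pi(U\mid (X_n,Y_n)_{n=1}^N)\to 1$ almost surely for every weak neighborhood $U$ of $f^*$.

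\textbf{Main obstacle.} The one non-routine ingredient is Step 1, which is exactly where Lemma \ref{elim2} does the work of translating the symbolic PARAFAC factorization problem into a statement about existence of a realizing $V^*$; without this the KL support condition would fail for fixed $K$. A smaller bookkeeping point is ensuring one can take the MiFM $J$ equal to the true $J$ (or absorbing any surplus columns into the intercept through zero entries in $V$); this is benign because the FFM prior places positive mass on arbitrary column configurations, including those whose corresponding PARAFAC contribution vanishes.
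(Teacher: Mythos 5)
Your proposal is correct and follows essentially the same route as the paper: Lemma \ref{elim2} supplies a realizing $V^*$, the FFM prior gives strictly positive mass to the true incidence structure, the full-support continuous priors on $V$ (and $\sigma$) give positive mass to any neighborhood of the realizing parameters, and the Gaussian KL computation over the bounded support of $f^*(x)$ verifies the KL support condition needed for Schwartz's theorem. The only cosmetic differences are that you treat $\sigma$ as an additional parameter to be matched (the paper implicitly fixes the variance) and argue via continuity around the exact solution, whereas the paper phrases the same step as positive prior probability of the coefficient-approximation event.
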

The proof's sketch for this theorem is given in the \hyperref[supp_th]{Supplement}.

\section{Prior constructions for interactions: FFM revisited and extended}
\label{hyps}
The adoption of the FFM prior on the hypergraph of interactions carries a distinct behavior in contrast to the typical
Latent Feature modeling setting. In a 
standard Latent Feature modeling setting \citep{griffiths2011indian}, 
each row of $Z$ describes one of the data points in terms of its feature 
representation; controlling row sums is desired to induce sparsity of the features. 
By contrast, for us a column of $Z$ is identified with an interaction; its sum represents
the interaction depth, which we want to control a priori. 
\paragraph{Interaction selection using MCMC sampler}

One interesting issue of practical consequence arises in the aggregation of the MCMC samples (details of the sampler are in the \hyperref[supp_gibbs]{Supplement}). When aggregating MCMC samples in the context of \emph{latent feature modeling} one would always obtain exactly $J$ latent features. However, in \emph{interaction modeling}, different samples might have no interactions in common (i.e. no exactly matching columns), meaning that support of the resulting posterior estimate can have up to $\min\{2^D-1, IJ\}$ unique interactions, where $I$ is the number of MCMC samples. In practice, we can obtain marginal distributions of all interactions across MCMC samples and use those marginals for selection. One approach is to pick $J$ interactions with highest marginals and another is to consider interactions with marginal above some threshold (e.g. 0.5). We will resort to the second approach in our experiments in Section \ref{experiment} as it seems to be in more agreement with the concept of "selection". Lastly, we note that while a data instance may a priori possess unbounded number of features, the number of possible interactions in the data is bounded by $2^D-1$, therefore taking $J\rightarrow \infty$ might not be appropriate. 
In any case, we do not want to encourage the number of interactions to be too high for regression modeling, 
which would lead to overfitting.
The above considerations led us to opt for a parametric prior such as the FFM for interactions structure $Z$, 
as opposed to going fully nonparametric. $J$ can then be chosen using model selection procedures (e.g. cross validation), or simply taken as the model input parameter.

\paragraph{Generalized construction and induced distribution of interactions depths}
We now proceed to introduce a richer family of prior distributions on hypergraphs of which the FFM
is one instance.
%Next, we shall discuss a certain deficient property of the distribution of columns sums and posterior updates under the FFM prior in the context of interactions. 
Our construction is motivated by the induced distribution on the column sums and the conditional
probability updates that arise in the original FFM.
Recall that under the FFM prior, interactions are a priori independent.
Fix an interaction $j$, for the remainder of this section let $Z_i$ denote the indicator of whether variable $i$ is present in interaction $j$ or not (subscript $j$ is dropped from $Z_{ij}$ to simplify notation). Let $M_i = Z_1 + \ldots + Z_i$ denote the number of variables among the first $i$ present in the corresponding interaction. 
By the Beta-Bernoulli conjugacy, one obtains $\P(Z_i=1|Z_1,\ldots,Z_{i-1}) = \frac{M_{i-1} + \gamma_1}{i-1+\gamma_1+\gamma_2}$. This highlights the ``rich-gets-richer'' effect of the FFM prior,
which encourages the existence of very deep interactions while most other interactions have very small depths.
In some situations we may prefer a relatively larger number of interactions of depths in the medium range.
%We see that the posterior of FFM is proportional to the depth of the interaction following "rich-gets-richer" (and consequentially, "poor-gets-poorer") principle, meaning that a deep interaction is "encouraged" to grow and a shallow one is likely to stay shallow. 

%This is opposite of what we would want from the behavior of interaction's depth.
%\mikcomm{
%%REMOVE THIS?
%Imagine a situation when there are $D=10$ variables and there is a true interaction between 5 variables, while our current iteration of the Marcov Chain only has 2 variables active in the interaction. Data makes no distinction between this 2 variable interaction or any 3 variable one, therefore decision is solely guided by the prior and the probability of the interaction to "grow" is approximately 0.27 (if $\gamma_1=\gamma_2=1$), i.e. "poor-gets-poorer". While if one could run infinite amount of MCMC iterations, the chain would eventually get to the 4-way interaction and then data would guide the transition to the true 5-way one, we would prefer our prior to be more "eager" to explore higher order interactions.}

An intuitive but somewhat naive alternative sampling process is to allow
a variable to be included into an interaction according to its present "shallowness" quantified by $(i-1-M_{i-1})$ (instead
of $M_{i-1}$ in the FFM).  It can be verified that this construction will lead to a distribution of interactions which 
concentrates most its mass around $D/2$; moreover, exchangeability among $Z_i$ would be lost.
%There are two problems with this line of thinking: it creates an extremely balanced distribution on the depths of interaction and it is not exchangeable. Therefore we define a richer spectrum of sampling behavior where FFM occupies one of the extreme points, while maintaining the exchangeability. 
To maintain exchangeability, we define the sampling process for the sequence $Z=(Z_1,\ldots,Z_D) \in
\{0,1\}^D$ as follows:
%Let $D$ be number of variables in the data and $Z\in \{0,1\}^D$ is $j$-th interaction (subscript $j$ is dropped to simplify notation). 
let $\sigma(\cdot)$ be a random uniform permutation of $\{1,\ldots,D\}$ and let $\sigma_1 = \sigma^{-1}(1),\ldots,\sigma_D = \sigma^{-1}(D)$. Note that $\sigma_1,\ldots,\sigma_D$ are discrete random variables and $\P(\sigma_k = i)=1/D$ for any $i,k = 1,\ldots,D$. For $i=1,\ldots,D$, set
\begin{eqnarray}
\label{nex}
&\P(Z_{\sigma_i}=1|Z_{\sigma_1},\ldots,Z_{\sigma_{i-1}}) = \frac{\alpha M_{i-1} + (1-\alpha)(i-1-M_{i-1})+\gamma_1}
{i-1+\gamma_1+\gamma_2},\nonumber\\
&\P(Z_{\sigma_i}=0|Z_{\sigma_1},\ldots,Z_{\sigma_{i-1}}) = \frac{(1-\alpha)M_{i-1} + \alpha(i-1-M_{i-1})+\gamma_2}
{i-1+\gamma_1 + \gamma_2},
\end{eqnarray}
where $\gamma_1>0,\gamma_2>0,\alpha \in [0,1]$ are given parameters and $M_i = Z_{\sigma_1} + \ldots + Z_{\sigma_i}$. The collection of $Z$ generated by this process shall be called to follow FFM$_\alpha$.
When $\alpha=1$ we recover the original FFM prior. When $\alpha=0$, we get the other extremal behavior
mentioned at the beginning of the paragraph. Allowing $\alpha\in [0,1]$ yields a richer spectrum 
spanning the two distinct extremal behaviors.
%The new challenge of this process is lack of exchangeability when $\alpha\neq 1$.
%Indeed the above construction admits partial exchangeability with respect to ordering of 1s (and 0s): $\P(Z_{i_1},\ldots,Z_{i_{M_D}})|Z_{i_1}=1,\ldots,Z_{i_{M_D}}=1) = \P(Z_{\sigma(i_1)},\ldots,Z_{\sigma(i_{M_D})})|Z_{i_1}=1,\ldots,Z_{i_{M_D}}=1)$ for any permutation $\sigma$. Instead we would like to have full exchangeability and therefore we modify \ref{nex} by putting a uniform permutation prior on the order variables come into the process. We shall call the new construction FFM$_\alpha$.

Details of the process and some of its properties are given in the \hyperref[ffm_all]{Supplement}. Here we briefly describe how FFM$_\alpha$ \emph{a priori} ensures "poor gets richer" behavior and offers extra flexibility in modeling interaction depths compared to the original FFM. The depth of an interaction of $D$ variables is described by the distribution of $M_D$. Consider the conditionals obtained for a Gibbs sampler where index of a variable to be updated is random and based on $\P(\sigma_D = i|Z)$ (it is simply $1/D$ for FFM$_1$). Suppose we want to assess how 
likely it is to \emph{add} a variable into an existing interaction via the expression
 $\sum_{i:Z^{(k)}_i = 0}\P(Z^{(k+1)}_i = 1, \sigma_D = i| Z^{(k)})$, 
 %(how likely the Gibbs sampler will choose to update 0 and reassign it to 1), 
 where $k+1$ is the next iteration of the Gibbs sampler's conditional update. This probability is a function of $M_D^{(k)}$; for small values of $M_D^{(k)}$ it quantifies the tendency for the "poor gets richer" behavior. For the FFM$_1$ it is given by $\frac{D-M_D^{(k)}}{D}\frac{M_D^{(k)} + \gamma_1}{D-1+\gamma_1+\gamma_2}$. In Fig. \ref{fig:hip_ibp}(a) we show that FFM$_1$'s behavior is opposite of "poor gets richer", while $\alpha\leq 0.7$ appears to ensure the desired property. Next, in Fig.\ref{fig:hip_ibp} (b-f) we show the distribution of $M_D$ for various $\alpha$,
which exhibits a broader spectrum of behavior. 

\begin{figure*}[ht!]
\centerline{\includegraphics[width=\textwidth]{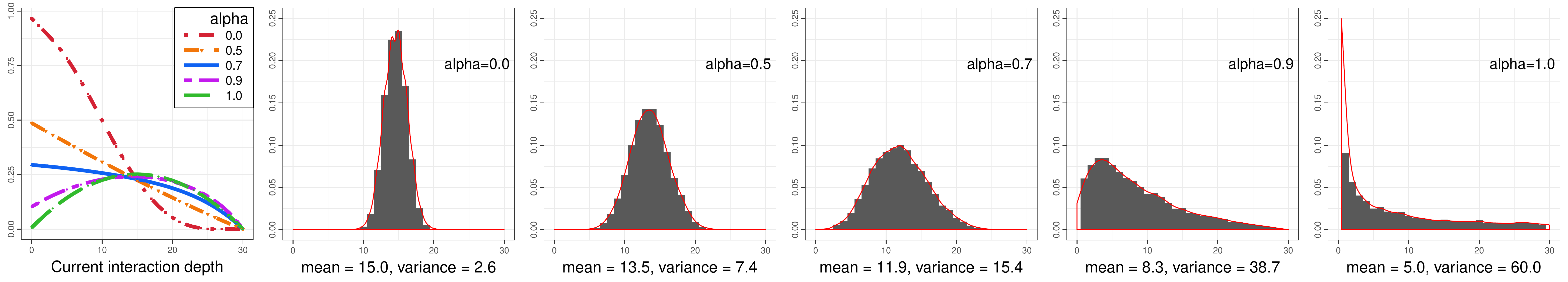}}
%\caption{EBP $M_n$ distributions with $n=100, \gamma_1=1/3, \gamma_2 = 1$ (a) $\alpha=0$, $\E M_{100} = 50$, $\text{Var} M_{100} = 8.5$; 
%(b) $\alpha=0.5$, $\E M_{100} = 48$, $\text{Var} M_{100} = 25$; (c) $\alpha=0.85$, $\E M_{100} = 40$, $\text{Var} M_{100} = 210$;
%(d) $\alpha=0.9$, $\E M_{100} = 36$, $\text{Var} M_{100} = 325$;
%(e) $\alpha=0.95$, $\E M_{100} = 31$, $\text{Var} M_{100} = 499$; 
%(f) $\alpha=1$, $\E M_{100} = 25$, $\text{Var} M_{100} = 814$ }
\caption{$D=30$, $\gamma_1=0.2$, $\gamma_2=1$ (a) Probability of increasing interaction depth; (b-f) FFM$_\alpha$ $M_D$ distributions with different $\alpha$.}
\label{fig:hip_ibp}
\vskip -0.2in
\end{figure*}

\section{Experimental Results}
\label{experiment}
%This section presents an extensive evaluation of the MiFM on simulated data
%and large scale real-world problems.
%\begin{comment}
%Recall that we need to pick a likelihood function $F(\cdot| y(x_n, \Theta),\alpha)$ for the MiFM model 
%(Eq. \eqref{ifm_y}). To speed up posterior estimation and to be able to handle both positive and negative responses, 
%we picked $\mathcal{N}(y(x_n, \Theta), \alpha)$ to model the response. Normal-normal conjugacy then allows for the MiFM inference with Gibbs sampler. 
%The sampling of $V$ goes as in \citet{freudenthaler2011bayesian}, while sampling $Z$ 
%is based on the full conditionals of the EBP (details are given in the Supplement). For MiFM$_\alpha$ we shall use $\alpha=0.9$, $\gamma_1=0.2$ and $\gamma_2=1$.
%\end{comment}

%\begin{comment}
%Other choices that allow to preserve conjugacy include Poisson likelihood, while replacing $V$ prior to be Gamma, as in \citet{saha2015nonparametric}, or Negative Binomial likelihood, for which conjugate prior construction was proposed in \citet{zhou2012lognormal}. Other distributional choices are also possible by utilizing the Metropolis Hastings sampling procedure.
%\end{comment}

\subsection{Simulation Studies}

%To demonstrate the effectiveness of MiFM and FFM$_\alpha$ prior we shall consider MiFM$_{0.7}$ based on FFM$_{\alpha=0.7}$ and MiFM$_1$ based on regular FFM (i.e. $\alpha=1$). The choice of $\alpha=0.7$ is based on our discussion in Section \ref{hyps}, where we showed that this value both achieves "poor get richer" Gibbs updates and smooth behavior of the interaction depths. 
%As in Fig. \ref{fig:hip_ibp} we set $\gamma_1=0.2$, $\gamma_2=1$ (these parameters do not have significant effect on the properties we want to achieve). 
%We also suggested that choice of $J$ is not crucial, therefore we set $J=D$ in our experiments. 
%{\bf is there a more convincing way to expressing why the choice of J or gamma are not that important?
%I find this problematic}
%As in FM, we don't have a clear procedure for selecting $K$, but if the interaction structure is expected to be sparse (i.e. only few interactions of order $D-1$), then small values are sufficient.
We shall compare MiFM methods against a variety of other regression techniques in the 
literature, including Bayesian Factorization Machines (FM), lasso-type regression, Support Vector Regression (SVR),
multilayer perceptron neural network (MLP).\footnote{Random Forest Regression and optimization based FM showed worse results than other methods.}
%Data is simulated using varying interactions structures and combinations of continuous and categorical variables. %and underlying distributions of the interaction coefficients. 
The comparisons are done on the basis of prediction accuracy of responses (Root Mean Squared Error on the held out data), quality of
regression coefficient estimates and the interactions recovered.
%For generating $X$, all categorical variables are uniformly sampled across corresponding attributes. Each continuous variable $x_d$ is independently sampled in a hierarchical manner:
%$\mu_d \thicksim \mathcal{N}(0, 0.25),\;\;
%\sigma_d\thicksim\Gamma(0.5, 1)$, and 
%$x_{nd}|\mu_d, \sigma_d \thicksim\mathcal{N}(\mu_d, \sigma_d^2)\text{ for }n=1,\ldots,N$.
%
%The response $y$ is generated independently from the unit variance Gaussian with the mean given according 
%to the general form defined by Eq. \eqref{yhat}. 
%Hypergraph of interactions is sampled uniformly based on the desired distribution of interaction depths $M_n$.
%If a categorical variable is chosen as part of the interaction, then one of its attributes is chosen uniformly. Coefficients of the interactions $\beta_j$'s are sampled independently in a hierarchical manner and incremented by 1 to avoid negligible non-zero coefficients:
%$a_j \thicksim \Gamma(1.5, 1),\;\;
%b_j \thicksim\Gamma(1, 1)$, and
%$\beta_j|a_j, b_j \thicksim\Gamma(a_j, b_j)+1\text{ for }j=1,\ldots,J$.
%Each experiment is repeated 20 times, each time 25\% of the data is set aside for evaluating Root Mean Squared Error (RMSE).
\begin{figure*}[t!]
\centerline{\includegraphics[width=\textwidth]{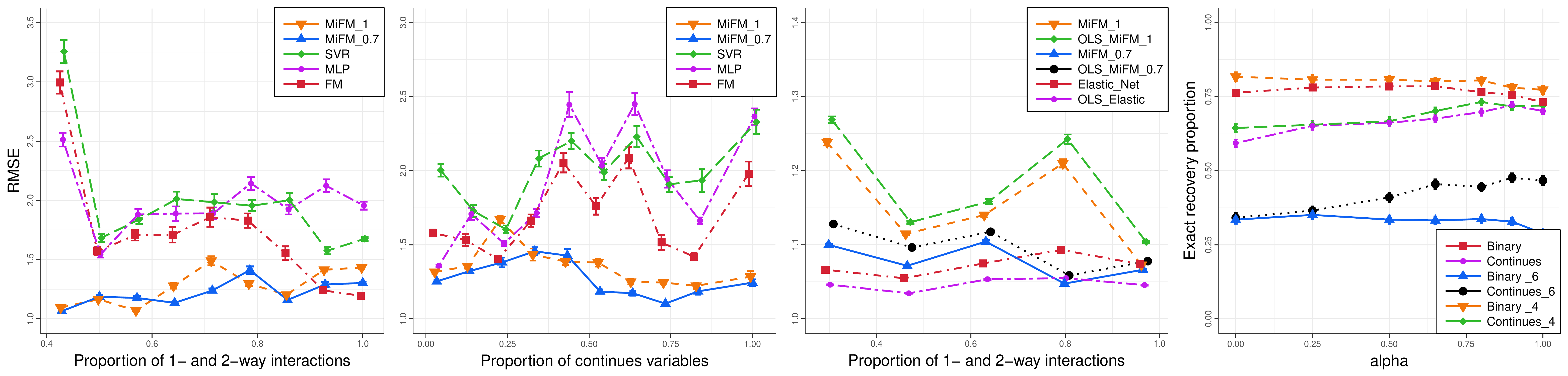}}
\caption{RMSE for experiments: (a) interactions depths; (b) data with different ratio of continuous to categorical variables; (c) quality of the MiFM$_1$ and MiFM$_{0.7}$ coefficients; (d) MiFM$_\alpha$ exact recovery of the interactions with different $\alpha$ and data scenarios} 
\label{fig:all}
\vskip -0.2in
\end{figure*}

\subsubsection{Predictive Performance}
\label{pred}
%In this set of experiments, we use 15 continuous variables and 5 categorical. After one-hot encoding total number of variables is $D=30$. We include all linear effects, 20 two-way interactions and 20 higher order ones. Sample size is set to $N=3000$. We tested 5 methods: MiFM$_\alpha$, with $\alpha = 1$ and $\alpha=0.9$ (while fixing $\gamma = 0.2$). We set $J=70$ and $K=8$ (selection of $J$ and $K$ is discussed in the Supplement);
In this set of experiments we demonstrate that MiFMs with either $\alpha=0.7$ or $\alpha=1$ have dominant predictive performance when high order interactions are in play.
%We tested MiFM$_\alpha$ and MiFM$_1$ versus
%MCMC implementation of Factorization Machines from pywFM package\footnote{\url{https://github.com/jfloff/pywFM}} (which is a Python wrapper for libfm \citep{rendle2012factorization}); Support Vector Regression with polynomial kernel of degree 2 and cross-validation based tuning of parameters and Multilayer Perceptron with 3 layers.

In Fig. \ref{fig:all}(a) we analyzed 70 random interactions of varying orders. We see that MiFM can handle arbitrary complexity of the interactions, while other methods are comparative only when interaction structure is simple (i.e. linear or 2-way on the right of the Fig. \ref{fig:all}(a)).

Next, to assess the effectiveness of MiFM in handling categorical variables (cf. Section \ref{ifm_cat})
we vary the number of continuous variables from 1 (and 29 attributes across categories) to 30 (no categorical variables).
Results in Fig. \ref{fig:all}(b) demonstrate that our models can handle both variable types in the data (including continuous-categorical interactions), and still exhibit competitive RMSE performance.

\subsubsection{Interactions Quality}

\paragraph{Coefficients of the interactions}
This experiment verifies the posterior consistency result of Theorem \ref{th1} and validates our factorization model for coefficients approximation. In Fig. \ref{fig:all}(c) we compare MiFMs versus OLS fitted with the corresponding sets of chosen interactions. Additionally we benchmark against Elastic net \citep{zou2005regularization} based on the expanded data matrix with interactions of all depths included, that is $2^D - 1$ columns, and a corresponding OLS with only selected interactions. 

%compare quality of the interactions selected by MiFM versus established variable selection techniques. We shall fit both MiFM$_1$ and MiFM$_{0.7}$ and then refit OLS for the corresponding sets of chosen interactions to access the quality of the factorized coefficients versus those found by the OLS.
%We say that an interaction is chosen if it is present in at least half of the MCMC samples.
%Recall that MiFM always includes all linear interactions, while MiFM\_0 performs selection over all possible interaction depths. 
%We also consider Elastic net \citep{zou2005regularization} based on the expanded data matrix with interactions of all depths included, that is $2^D - 1$ columns (we pick small $D=9$ to make it possible), and a corresponding OLS with only selected interactions. Fig. \ref{fig:all}(c) shows that
% MiFM$_{0.7}$ performs better than MiFM$_1$ and is as good as Elastic net, additionally 
%both MiFM$_1$ and MiFM$_{0.7}$ are comparable to the corresponding OLS fits, thereby validating our factorization model for coefficients approximation and the assertion of Theorem \ref{th1}.
%Note that this is only possible in low dimensional settings, therefore we set $D=9$ continuous variables. Results over 200 repetitions are reported in Fig. \ref{fig:all} (c). We see that MiFM$_{0.7}$ performs better than MiFM$_1$ and is as good as Elastic net, additionally both MiFM$_1$ and MiFM$_{0.7}$ are comparable to the corresponding OLS fits, thereby validating our factorization model for coefficients approximation and the assertion of Theorem \ref{th1}.

\paragraph{Selection of the interactions}
In this experiments we assess how well MiFM can recover true interactions. We consider three interaction structures: a realistic one with five linear, five 2-way, three 3-way and one of each $4,\ldots,8$-way interactions, and two artificial ones with 15 either only 4- or only 6-way interactions to challenge our model. Both binary and continuous variables are explored.
%since MiFM at this point can not recover interactions between attributes of categorical variables. 
Fig. \ref{fig:all}(d) shows that MiFM can \emph{exactly} recover up to 83\% of the interactions and with $\alpha=0.8$ it recovers 75\% of the interaction in 4 out of 6 scenarios. Situation with 6-way interactions is more challenging, where 36\% for binary data is recovered and almost half for continuous. It is interesting to note that lower values of $\alpha$ handle binary data better, while higher values are more appropriate for continuous, which is especially noticeable on the "only 6-way" case. We think it might be related to the fact that high order interactions between binary variables are very rare in the data (i.e. product of 6 binary variables is equal to 0 most of the times) and we need a prior eager to explore ($\alpha=0$) to find them.
\subsection{Real world applications}
\label{real}
%We consider two major applications of our regression method: genetics and retail demand forecasting. Identifying epistasis (i.e. interactions between genes) is one of the major questions in the field of human genetics. Interactions between multiple genes and environmental factors can often tell a lot more about the presence of a certain disease than any of the genes individually \citep{templeton2000epistasis}. In the retail industry data can contain information about multiple product aspects, store locations, running promotions, holidays, etc. Finding interactions among categories can allow for better understanding of the consumers behavior.
%\textbf{Should I also put results from datasets where I didn't find interactions - i.e. abalone, Boston housing, etc.?}
\subsubsection{Finding epistasis}
Identifying epistasis (i.e. interactions between genes) is one of the major questions in the field of human genetics. Interactions between multiple genes and environmental factors can often tell a lot more about the presence of a certain disease than any of the genes individually \citep{templeton2000epistasis}. Our analysis of the epistasis is based on the data from \citet{himmelstein2011evolving}. These authors show that interactions between single nucleotide polymorphisms (SNPs) are often powerful predictors of various diseases, while individually SNPs might not contain important information at all. They developed a model free approach to simulate data mimicking relationships between complex gene interactions and the presence of a disease.
%Simulated datasets are available publicly online.\footnote{\url{https://github.com/greenelab/model-free-data}}
We used datasets with five SNPs and either 3-,4- and 5-way interactions or only 5-way interactions.
%Each SNP is a categorical variable with 3 attributes: homozygous, heterozygous and homozygous of the alternate allele.
%Response variable is a binary class.
For this experiment we compared MiFM$_1$, MiFM$_0$; refitted logistic regression for each of our models based on the selected interactions (LMiFM$_1$ and LMiFM$_0$), Multilayer Perceptron with 3 layers and Random Forest.\footnote{FM, SVM and logistic regression had low accuracy of around 50\% and are not reported.} 
%We analyzed 2 sets of 100 datasets - one with 3-,4- and 5-way interactions and another with only 5-way interactions (as stated in the data description). For each dataset 25\% was randomly selected for validation.
Results in Table \ref{genes} demonstrate that MiFM produces
competitive performance compared to the very best black-box techniques on this data set, 
while it also selects interacting genes (i.e. finds epistasis).
%, which is a more important task than prediction in the genetics context.
We don't know which of the 3- and 4-way interactions are present in the data, but since there is only one possible 5-way interaction we can check if it was identified or not --- both MiFM$_1$ and MiFM$_0$ had a 5-way interaction in at least \emph{95\%} of the posterior samples.
%We conclude that these data sets are easy enough for our models and hence choice of $\alpha$ did not make a noticeable difference.
%We chose $\alpha=0$ since in our simulation studies we found that it is better at recovering interactions between binary variables. 

\begin{table*}[ht]
\vskip -0.1in
\caption{Prediction Accuracy on the Held-out Samples for the Gene Data}
\label{genes}
\centering
\begin{tabular}{lrrrrrr}
\toprule
{} &  MiFM$_1$ &  MiFM$_0$ &  LMiFM$_1$ &  LMiFM$_0$ &   MLP &    RF \\
\midrule
3-, 4-, 5-way        & 0.775 &   0.771 &          0.883 &            0.860 & 0.870 & 0.887 \\
only 5-way & 0.649 &   0.645 &          0.628 &            0.623 & 0.625 & 0.628 \\
\bottomrule
\end{tabular}
\vskip -0.05in
\end{table*}
%\begin{table*}[ht]
%\caption{Prediction Accuracy on the Held-out Samples for the Gene Data}
%\label{genes}
%%\vskip 0.15in
%\centering
%\begin{tabular}{lcccc}
%\toprule
%{} &  3-,4-,5-way mean &  3-,4-,5-way median &  5-way mean &  5-way median \\
%\midrule
%MiFM              &       0.731 &         0.733 &              0.587 &                0.627 \\
%MiFM\_0            &       0.742 &         0.742 &              \textbf{0.616} &                \textbf{0.640} \\
%FM               &       0.499 &         0.499 &              0.500 &                0.500 \\
%Logistic MiFM     &       0.874 &         \textbf{0.883} &              0.582 &                0.616 \\
%Logistic MiFM\_0   &       \textbf{0.875} &         \textbf{0.883} &              0.590 &                0.622 \\
%Logistic regular &       0.458 &         0.457 &              0.459 &                0.457 \\
%\bottomrule
%\end{tabular}
%\end{table*}
\subsubsection{Understanding retail demand}
We finally report the analysis of data obtained from a major retailer with stores in multiple locations all over the world. 
%\longcomm{company name?}
This dataset has 430k observations and 26 variables spanning over 1100 binary variables after the one-hot encoding. Sales of a variety of products on different days and in different stores are provided as response.
%Our goal is to predict sales amount of a given product for a specific day in a specific store and to find latent interactions between the data categories (i.e. 'country-city-week-month' can capture holidays or local features, 'month-product\_type-color' can capture seasonality of some items, etc.). 
%We set 25\% observations aside chosen randomly for testing purposes.
We will compare MiFM$_1$ and MiFM$_0$, both fitted with $K=12$ and $J=150$, versus Factorization Machines in terms of adjusted mean absolute percent error $\text{AMAPE}=100\frac{\sum_n|\hat{y}_n-y_n|}{\sum_n y_n}$, a common metric for evaluating sales forecasts. FM is currently a method of choice by the company for this data set, partly because the data is sparse and is similar in nature to the recommender systems.
%all except one variable are categorical with some having hundreds of attributes leading to a very sparse representation after one-hot encoding.
AMAPE for MiFM$_1$ is 92.4; for MiFM$_0$ - 92.45; for FM - 92.0.
%MiFM achieves similar performance as FM, but using much smaller number of linear and 2-way interactions, additionally selecting several high order ones.
%comparative to FM  Results are reported in Table \ref{retail}.
%\begin{table}[ht]
%\caption{Results on the Retail Data}
%\label{retail}
%\centering
%\begin{tabular}{lccc}
%%\hline
%%\abovespace\belowspace
%    \toprule
%    & MiFM$_1$ & MiFM$_\alpha$   & FM \\
%%\hline
%%\abovespace
%\midrule
%AMAPE, \%  & 92.4  & 92.45 &  92.0 \\
%Bias, \%  & 1.8 & 1.9 &  2.6 \\
%%92.6 2.1
%\bottomrule
%\end{tabular}
%%\vskip -0.1in
%\end{table}

\begin{figure*}[t!]
\vskip -0.05in
\centerline{\includegraphics[width=\columnwidth]{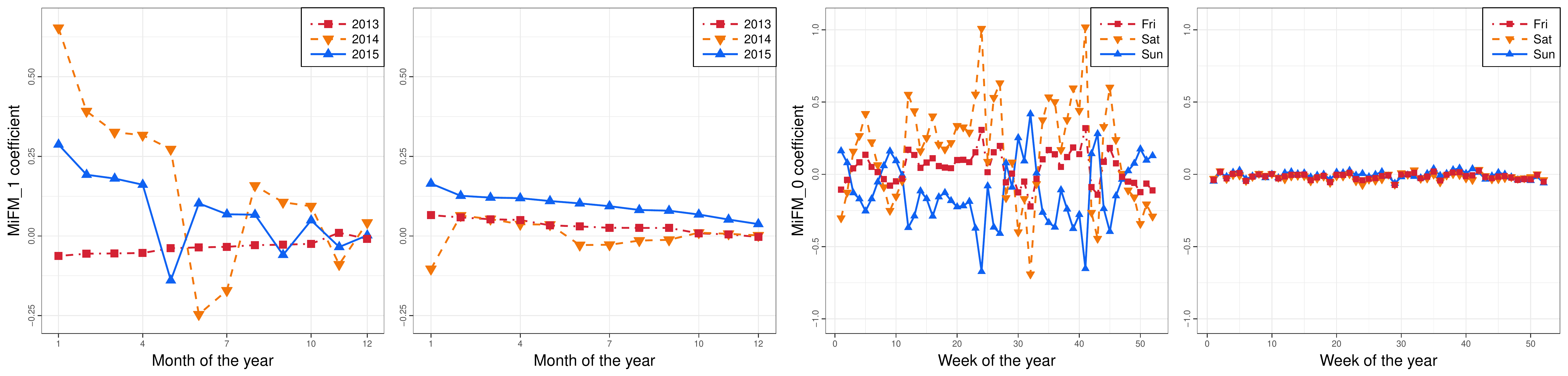}}
\caption{MiFM$_1$ store - month - year interaction: (a) store in Merignac; (b) store in Perols; MiFM$_0$ city - store - day of week - week of year interaction: (c) store in Merignac; (d) store in Perols.}
\label{fig:sales}
\vskip -0.2in
\end{figure*}

\paragraph{Posterior analysis of predictor interactions}
The unique strength of MiFM is the ability to provide valuable insights about the data through its posterior analysis.
%in 
%this example insights that can be utilized to improve marketing design in order to drive up future sale volumes. 
%Here we present an example of the posterior analysis.
%For the MiFM$_1$ 10\% of prediction is due to the non-linear interactions; 48\% due to bias term and the rest is due linear effects.
MiFM$_1$ recovered 62 non-linear interactions among which there are five 3-way and three 4-way. MiFM$_0$ selected 63 non-linear interactions including nine 3-way and four 4-way. We note that choice $\alpha=0$ was made to explore deeper interactions and as we see MiFM$_0$ has more deeper interactions than MiFM$_1$. Coefficients for a 3-way interaction of MiFM$_1$ for two stores in France across years and months are shown in Fig. \ref{fig:sales}(a,b). We observe different behavior, which would not be captured by a low order interaction. In Fig. \ref{fig:sales}(c,d) we plot coefficients of a 4-way MiFM$_0$ interaction for the same two stores in France. It is interesting to note negative correlation between Saturday and Sunday coefficients for the store in Merignac, while the store in Perols is not affected by this interaction - this is an example of how MiFM can select interactions between attributes across categories.

\section{Discussion}
\label{dis}
We have proposed a novel regression method which is capable of learning interactions of arbitrary orders among the regression predictors. Our model extends Finite Feature Model and utilizes the extension to specify a hypergraph of interactions, while adopting a factorization mechanism for representing the corresponding coefficients.
%Our construction of a Bayesian nonparametric prior for the hypergraph of interactions retains 
%much for the computational convenience of the IBP, while allowing
%extra control on the tail behavior of interaction depth, which may be useful in some applications.
%On this end we analyzed resulting system of polynomial equations to establish statistical consistency of the MiFM. We %would like to explore the Gröbner basis (topic in algebra often used for solving systems of polynomial equations) of this %system to understand our factorization mechanism better.
%
%The MiFM was shown to demonstrate superior performance in an extensive simulation study and in several real world data sets.
We found that MiFM performs very well when there are some important interactions among a relatively high number (higher than two) of predictor variables. This is the situation where existing modeling techniques may be ill-equipped at describing and recovering. There are several future directions that we would like to pursue. A thorough understanding of the fully nonparametric version of the FFM$_\alpha$ is of interest, that is, when the number of columns is taken to infinity.
Such understanding may lead to an extension of the IBP and new modeling approaches in various domains.

\subsubsection*{Acknowledgments}
This research is supported in part by grants NSF CAREER DMS-1351362, NSF CNS-1409303, a research
gift from Adobe Research and a Margaret and Herman Sokol Faculty Award.

\appendix
\section{Supplementary material}
In the Supplementary material we will start by proving consistency of the MiFM theorem, then we will show several important results related to FFM$_\alpha$: how exchangeability is achieved using uniform permutation prior on the order in which variables enter the process, how it leads to a Gibbs sampler using distribution of the index of the variable entering FFM$_\alpha$ last and how to obtain distribution of the interaction depths $M_D$ and compute its expectation. Lastly we will present a Gibbs sampling algorithm for the MiFM under the FFM$_\alpha$ prior on interactions structure $Z$.
\label{supp}
\subsection{Proof of the Consistency Theorem \ref{th1}}
\label{supp_th}
First let us remind the reader of the problem setup.
Suppose that the data pairs $(x_n,y_n)_{n=1}^{N} \in \mathbb{R}^D\times \mathbb{R}$ 
are i.i.d. samples from the joint distribution $P^*(X,Y)$, 
according to which marginal distribution for $X$ and the conditional distribution of 
$Y$ given $X$ admit density functions $f^*(x)$ and $f^*(y|x)$, respectively,
with respect to Lebesgue measure. In particular, $f^*(y|x)$ is defined as in Eq. \eqref{true}:
\begin{eqnarray*}
\left.\begin{aligned}
&Y=y_n|X=x_n,\Theta^*\thicksim \mathcal{N}(y(x_n, \Theta^*),\sigma),\text{ where }\Theta^*=\{\beta^*_1,\ldots,\beta^*_{J},Z^*_1,\ldots,Z^*_{J}\},\\
&y(x, \Theta^*):= \sum_{j=1}^{J}\beta^*_j\prod_{i\in Z^*_j}x_i,\text{ and }x_n\in \mathbb{R}^D, y_n\in \mathbb{R}, \beta^*_j\in \mathbb{R}, Z^*_j\subset \{1,\ldots,D\},\\
&\text{for }n=1,\ldots,N,j=1,\ldots,J.
\end{aligned}\right.
\end{eqnarray*}
In the above $\Theta^*$ represents the \emph{true} parameter for the conditional density $f^*(y|x)$
that generates data sample $y_n$ given $x_n$, for $n=1,\ldots,N$. 
On the other hand, the statistical modeler has access only to the MiFM:
\begin{eqnarray}
\label{ifm_proof}
\left.\begin{aligned}
& Z\thicksim\text{FFM$_\alpha$}(\gamma_1, \gamma_2), \ v_{ik}|\mu_k, \lambda_k \thicksim\mathcal{N}(\mu_k, \frac{1}{\lambda_k})\text{ for }i=1,\ldots,D;\,k=1,\ldots,K,\\
& y_n|\Theta\thicksim \mathcal{N}(y(x_n, \Theta),\sigma)\text{, where }y(x, \Theta):= \sum_{j=1}^J\sum_{k=1}^K\prod_{i\in Z_j}x_i v_{ik},\\
& \text{for }n=1,\ldots,N,\text{ and }\Theta=(Z,V).
\end{aligned}\right.
\end{eqnarray}
%By conditioning on the training data pairs $(x_n,y_n)$, the modeler obtains the posterior distribution
%on the parameters of interest, namely, $\Theta = (Z,V)$, which in turn induces the posterior
%distribution on the conditional density, to be denoted by $f(y|x)$, according to the model~\eqref{ifm_proof}.
%The main result of this section is to show that this posterior distribution will place most of 
%its mass on the true conditional density $f^*(y|x)$ under some conditions, as $N\rightarrow \infty$.
%To state the theorem precisely, we need to adopt a suitable notion of weak topology on the space
%of conditional densities, namely the set of $f(y|x)$, 
%which is induced by the weak topology on the space of joint densities on $X,Y$, 
%that is the set of $f(x,y) = f^*(x) f(y|x)$, where $f^*(x)$ is the true (but unknown)
%marginal density on $X$ (see~\citet{ghosal1999posterior}, Sec. 2 for a formal definition).
We omitted linear terms in the MiFM since they can naturally be parts of the interaction structure $Z$ and discarded hyperpriors for the ease of representation. Now we show that under some conditions posterior distribution $\Pi$ will place most of its mass on the true conditional density $f^*(y|x)$ as $N\rightarrow \infty$.
\begin{thmn}[\ref{th1}] Given any true conditional density $f^*(y|x)$ given by \eqref{true}, and assuming that the support of $f^*(x)$ is bounded,
there is a constant $K_0< J$ such that by setting $K \geq K_0$, the following statement holds:
for any weak neighborhood $U$ of $f^*(y|x)$,
under the MiFM model \eqref{ifm_proof}, the posterior probability 
$\Pi(U|(X_n,Y_n)_{n=1}^{N}) \rightarrow 1$ with $P^*$-probability one, as $N\rightarrow \infty$.
\end{thmn}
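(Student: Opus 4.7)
The plan is to apply Schwartz's theorem for posterior consistency in its conditional-density form (\citet{ghosal1999posterior}), whose only nontrivial hypothesis is that the true conditional density $f^*(y|x)$ lies in the Kullback--Leibler support of the prior induced on conditional densities through $\Theta=(Z,V)$. That is, it suffices to show that for every $\epsilon>0$,
\begin{equation*}
\Pi\!\left(\Theta : \int f^*(x)\,\mathrm{KL}\bigl(f^*(\cdot|x)\,\|\,f_\Theta(\cdot|x)\bigr)\,dx < \epsilon\right) > 0.
\end{equation*}
Once this is established, weak consistency follows automatically: one embeds the conditional model into the joint $f_\Theta(x,y)=f^*(x)f_\Theta(y|x)$ and uses that weak neighborhoods of $f^*(y|x)$ pull back to weak neighborhoods of the joint, then invokes the i.i.d.\ form of Schwartz's theorem.

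The first step is to use Lemma~\ref{elim2} to realize $f^*(y|x)$ exactly within the MiFM family. With $K \geq K_0$ as provided by the lemma applied to $\{\beta^*_j, Z^*_j\}_{j=1}^J$, there exists a matrix $V^* \in \mathbb{R}^{D\times K}$ solving $\beta^*_j = \sum_{k=1}^K \prod_{i \in Z^*_j} v^*_{ik}$ for every $j$. Taking $Z = Z^*$ and $V = V^*$ then gives $y(x,(Z^*,V^*)) = y(x,\Theta^*)$ pointwise, so $f_{(Z^*,V^*)}(y|x) = f^*(y|x)$.

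The second step reduces KL positivity to a continuity argument. Because both the target and model densities are Gaussian in $y$ with common variance $\sigma^2$, the pointwise KL simplifies to $\bigl(y(x,\Theta^*) - y(x,\Theta)\bigr)^2/(2\sigma^2)$. The map $V \mapsto y(x,(Z^*,V))$ is a polynomial in $V$, hence continuous; joint continuity combined with the assumed bounded support of $f^*(x)$ yields uniform approximation on that support. So for any $\epsilon>0$ there is an open neighborhood $\mathcal{V}_\epsilon$ of $V^*$ such that $\sup_{x \in \mathrm{supp}(f^*)} |y(x,(Z^*,V))-y(x,\Theta^*)| < \epsilon$ for every $V \in \mathcal{V}_\epsilon$. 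The Gaussian prior on $V$ assigns $\mathcal{V}_\epsilon$ positive mass, and the FFM$_\alpha$ prior on $Z$ with $\gamma_1,\gamma_2>0$ assigns strictly positive probability to the specific matrix $Z^*$, since the transition probabilities in~\eqref{nex} are uniformly bounded away from $0$ and $1$, making every individual column configuration --- and hence the finite product over $J$ columns --- positive. Multiplying these, the prior-positive set $\{Z=Z^*,\,V\in \mathcal{V}_\epsilon\}$ lies inside a KL $\epsilon^2/(2\sigma^2)$-ball around $f^*$, establishing KL support.

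The main obstacle is conceptual rather than technical: the MiFM parameterization $(Z,V)$ does not coincide with the unconstrained $(\beta^*,Z^*)$ representation of the truth, so one cannot directly transfer prior mass between the two. Lemma~\ref{elim2} is precisely the bridge that removes this obstacle by guaranteeing surjectivity of the PARAFAC coefficient map onto the required values of $\beta^*_1,\ldots,\beta^*_J$ as soon as $K \geq K_0$; without it, no amount of Gaussian spread on $V$ could be guaranteed to reach the true interaction weights. The remaining ingredients --- strict positivity of FFM$_\alpha$ on any finite binary configuration, continuity of polynomial maps on compact sets, and the passage from KL support to weak consistency via Schwartz's theorem in the covariate-conditional setting of~\citet{ghosal1999posterior} --- are all standard, and no entropy/sieve condition is needed because only weak (rather than Hellinger) consistency is claimed.
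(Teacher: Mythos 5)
Your proposal is correct and follows essentially the same route as the paper: Lemma~\ref{elim2} gives exact representability of $\beta^*_1,\ldots,\beta^*_J$ by the PARAFAC coefficients once $K\geq K_0$, positivity of the FFM prior on $Z^*$ together with positivity of the Gaussian prior on a neighborhood of a solution $V^*$ gives prior mass in every KL ball (using that the Gaussians share a variance so KL reduces to the squared mean difference, controlled via the bounded support of $f^*(x)$), and Schwartz's theorem then yields weak posterior consistency. The only cosmetic difference is that you argue positive $V$-mass by continuity of the polynomial map at an exact solution, whereas the paper argues it via the almost-sure linear independence of the random PARAFAC vectors; both serve the same purpose.
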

A key part in the proof of this theorem is to clarify the role of parameter $K$, and the fact that
under model~\eqref{ifm_proof}, the regression coefficient $\beta_j$ associated with 
interaction $j$ is parameterized by $\beta_j := \sum_{k=1}^K \prod_{i \in Z_j} v_{ik}$,
for $j=1,\ldots,J$, which for some suitable choice of $\Theta = (Z,V)$ 
can represent exactly the true parameters $\beta_1^*,\ldots,\beta^*_{J}$, provided that
$K$ is sufficiently large. The following basic lemma is informative.

%\begin{defn}
%We call $V\in\mathbb{R}^{D\times K}$ interaction invariant if $\forall \text{ distinct }Z_j\subset \{1,\ldots,D\}, j=1,\ldots,J$ system of vectors $\{(\prod_{i \in %Z_1}v_{ik},\ldots,\prod_{i \in Z_J}v_{ik}), k=1,\ldots,K\}$ contains $\min(K,J)$ linearly independent vectors for any $J$.
%\end{defn}

\begin{lem}
\label{elim}
%exists $K_0: \forall K\geq K_0$
Let $m \in [1,J]$ be a natural number, $\beta_j \in \mathbb{R}\setminus\{0\}$ for $j=1,\ldots, m$.
Suppose that the $m$ subsets $Z_j\subset \{1,\ldots,D\}$ for 
$j=1,\ldots m$ have non-empty intersection, then as long as $K\geq m$,
the system of polynomial equations
\begin{equation}
\label{syst}
\sum_{k=1}^K \prod_{i \in Z_j} v_{ik} = \beta_j\text{, }j=1,\ldots,m
\end{equation}
has at least one solution in terms of $v_{11},\ldots,v_{DK}$ such that the following
collection of $K$ vectors in $\mathbb{R}^m$, namely,
$\{(\prod_{i \in Z_1}v_{ik},\ldots,\prod_{i \in Z_m}v_{ik}), k=1,\ldots,K\}$ contains $m$ 
linearly independent vectors.
\end{lem}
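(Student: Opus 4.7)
The idea is to exploit the non-empty intersection hypothesis to reduce the polynomial system to a (generalized) Vandermonde linear system. Fix any $i^* \in \bigcap_{j=1}^m Z_j$ and, for each column $k$, introduce $c_k := v_{i^*,k}$ and $p_{jk} := \prod_{i \in Z_j \setminus\{i^*\}} v_{ik}$ (empty product equal to $1$), so that $\prod_{i \in Z_j} v_{ik} = c_k p_{jk}$. The system then becomes $\sum_{k=1}^K c_k p_{jk} = \beta_j$ for $j = 1,\ldots,m$, and the $k$-th vector in the collection reads $u_k = c_k (p_{1k}, \ldots, p_{mk})$. Since the interactions $Z_1,\ldots,Z_m$ are distinct and all contain $i^*$, the reduced sets $S_j := Z_j \setminus\{i^*\}$ are pairwise distinct subsets of $\{1,\ldots,D\} \setminus \{i^*\}$.

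Next I would pick integer exponents $e_i := 2^i$ for $i \neq i^*$, which ensures that the sums $E_j := \sum_{i \in S_j} e_i$ are pairwise distinct by the uniqueness of binary expansions. For $k = 1,\ldots,m$, take $v_{ik} := \lambda_k^{e_i}$ for $i \neq i^*$, where $\lambda_1, \ldots, \lambda_m > 0$ are distinct scalars to be fixed shortly; this yields $p_{jk} = \lambda_k^{E_j}$, so the $m \times m$ matrix $P := (\lambda_k^{E_j})_{j,k=1}^m$ is a generalized Vandermonde matrix. Its nonsingularity is standard: a nontrivial combination $\sum_j a_j x^{E_j}$ cannot vanish at $m$ distinct positive points, as can be seen by induction after factoring out $x^{\min_j E_j}$. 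For $k > m$, simply set $v_{i^*,k} := 0$, which forces $u_k = 0$ and makes those columns contribute nothing to either the sum or the linear independence count.

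With these choices the problem collapses to the linear system $Pc = \beta$ in the unknowns $c = (c_1,\ldots,c_m)$, uniquely solved by $c = P^{-1}\beta$. Because $u_k$ equals $c_k$ times the $k$-th column of $P$ (for $k \leq m$) and the columns of $P$ are linearly independent, linear independence of $u_1,\ldots,u_m$ is equivalent to $c_k \neq 0$ for all $k \leq m$. The main obstacle is to secure this last condition; by Cramer's rule, $c_k = \det(P^{(k)})/\det(P)$, where $P^{(k)}$ is $P$ with its $k$-th column replaced by $\beta$, and each $\det(P^{(k)})$ is a polynomial in $\lambda_1,\ldots,\lambda_m$. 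It then suffices to argue that none of these polynomials is identically zero, so that the bad locus $\bigcup_k \{c_k = 0\}$ is a proper Zariski-closed subvariety of the positive orthant which a generic $\lambda$ avoids. Expanding $\det(P^{(k)})$ along its $k$-th column produces $\sum_{j} (-1)^{j+k} \beta_j M_{jk}$, where the minors $M_{jk}$ are themselves $(m-1)\times(m-1)$ generalized Vandermonde determinants in $\{\lambda_{k'} : k' \neq k\}$; their distinct leading monomials (induced by removing different rows of exponents $E_{j}$) make them linearly independent as polynomials, and combined with $\beta_j \neq 0$ this gives $\det(P^{(k)}) \not\equiv 0$. Choosing a $\lambda$ outside the bad locus, together with the zero-padding for $k > m$, produces the required solution with $u_1,\ldots,u_m$ linearly independent in $\mathbb{R}^m$.
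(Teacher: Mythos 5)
Your proof is correct and follows essentially the same route as the paper's: fix an element $i_0$ of $\bigcap_j Z_j$ and view \eqref{syst} as a linear system in $v_{i_0 1},\ldots,v_{i_0 K}$ whose coefficient matrix $\bigl(\prod_{i\in Z_j\setminus\{i_0\}}v_{ik}\bigr)_{j,k}$ is chosen to have full rank $m$. Your generalized Vandermonde construction and the Cramer's-rule genericity argument ensuring $v_{i_0 k}\neq 0$ simply make explicit two points the paper asserts without detail, namely that such a full-rank choice of the remaining $v_{ik}$ exists and that the solution can be taken so that the $m$ scaled columns stay linearly independent.
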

\begin{proof}
Let $i_0$ be an element of the intersection of all $Z_j$, for $j=1,\ldots, m$.
We consider system \eqref{syst} as linear with respect to $\{v_{i_01},\ldots,v_{i_0K}\}$,
where corresponding coefficients are given by $\prod_{i \in Z_j\setminus\{i_0\}} v_{i,k}$, which we can pick to form a matrix of nonzero determinant. Hence by Rouché–Capelli theorem the system has at least one solution if $K\geq m$ and, since $\beta_j\neq 0$ for $\forall j$, the resulting $\{(\prod_{i \in Z_1}v_{ik},\ldots,\prod_{i \in Z_m}v_{ik}), k=1,\ldots,K\}$ contains
at least $m$ linearly independent vectors.
\end{proof}
\begin{lemn}[\ref{elim2}]
Given natural number $J\geq 1$, $\beta_j \in \mathbb{R}\setminus\{0\}$ and $Z_j\subset \{1,\ldots,D\}$ for $j=1,\ldots J$, exists $K_0 < J: \forall K\geq K_0$ system of polynomial equations \eqref{syst}
has at least one solution in terms of $v_{11},\ldots,v_{DK}$.
\end{lemn}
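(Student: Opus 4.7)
The plan is to prove the bound $K_0 \leq J - 1$ by induction on $J$, using a ``one extra slot per interaction'' construction in which each new slot is engineered to affect only a single, carefully chosen equation.

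First I would reduce to the case where $Z_1, \ldots, Z_J$ are pairwise distinct, since coinciding $Z_j$'s produce identical products and force equality of the associated $\beta_j$'s, reducing the effective number of equations. For the base case $J = 2$ with distinct $Z_1, Z_2$, a direct construction with $K = 1$ suffices: pick one ``private'' index for each set, set all other $v_{i,1}$ to $1$, and adjust the two private coordinates to hit $\beta_1$ and $\beta_2$ (dividing by $\beta_1$ in the nested case $Z_1 \subsetneq Z_2$, which is legitimate since $\beta_1 \neq 0$).

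For the inductive step, assume $K_0(J - 1) \leq J - 2$ for any $J - 1$ distinct interactions. Given $Z_1, \ldots, Z_J$, choose $j_0$ so that $Z_{j_0}$ is minimal in the inclusion order on $\{Z_1, \ldots, Z_J\}$. Distinctness plus $J \geq 2$ implies $Z_{j_0} \neq \{1, \ldots, D\}$, hence $S := \{1, \ldots, D\} \setminus Z_{j_0}$ is nonempty and, by minimality, meets every $Z_j$ with $j \neq j_0$. Apply the inductive hypothesis to the subsystem indexed by $j \neq j_0$ to obtain values $v_{i,k}$ for $k \leq K_0(J-1)$ solving those $J - 1$ equations. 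Then introduce one additional slot $k^{*} = K_0(J-1) + 1$: set $v_{i, k^{*}} = 0$ for $i \in S$ (which kills the contribution of slot $k^{*}$ to each equation with $j \neq j_0$, because every such $Z_j$ contains some $i \in S$), and pick the remaining $v_{i, k^{*}}$ for $i \in Z_{j_0}$ so that $\prod_{i \in Z_{j_0}} v_{i, k^{*}}$ equals the residual $\beta_{j_0} - \sum_{k < k^{*}} \prod_{i \in Z_{j_0}} v_{i,k}$, which is achievable by setting all but one factor to $1$ and the remaining factor to this residual (or zeroing one factor if the residual vanishes).

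This gives $K_0(J) \leq K_0(J-1) + 1 \leq J - 1 < J$, as required. The main point to verify is the existence of a minimal $Z_{j_0}$ strictly contained in $\{1, \ldots, D\}$, which is where the argument is most delicate; distinctness and $J \geq 2$ take care of this. Notably, the construction sidesteps the rank-type machinery of Lemma \ref{elim} entirely, since only the existence of a solution is needed, not a factorization with a prescribed linear independence property.
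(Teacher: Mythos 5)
Your proposal is correct, but it takes a genuinely different route from the paper's. The paper proves Lemma \ref{elim2} by grouping equations around a shared variable: Lemma \ref{elim} treats each group \eqref{syst} as a linear system in $v_{i_0,1},\ldots,v_{i_0,K}$ (Rouch\'e--Capelli, together with a linear-independence bookkeeping condition that lets the argument be iterated), and an elimination over variables ordered by the counts $J_i=\card(\{Z_j\,|\,i\in Z_j\})$ yields $K_0=\max(J^0,J^1,\ldots)$, which is typically much smaller than $J$ when each variable occurs in few interactions and attains $J-1$ only in the worst case of all $(D-1)$-way interactions. You instead induct on the number of equations, spending one dedicated column per equation: a minimal $Z_{j_0}$ (which cannot be all of $\{1,\ldots,D\}$ once the $Z_j$ are distinct and $J\geq 2$) has complement $S$ meeting every other $Z_j$, so zeroing the new column on $S$ decouples it from the already-solved equations while its restriction to $Z_{j_0}$ absorbs the residual. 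This is more elementary --- no rank or independence conditions --- and it fully suffices both for the lemma and for its role in Theorem \ref{th1}, where only existence of a solution (hence positive prior mass nearby) is used. What you give up is the sharper, structure-dependent bound: your $K_0\leq J-1$ is exactly the paper's worst case, whereas the elimination bound $\max(J^0,J^1,\ldots)$ is what justifies the paper's remark that small $K$ usually suffices in practice. Two small points you should make explicit: a $K_0$-column solution extends to any $K\geq K_0$ by padding with all-zero columns (the products over nonempty $Z_j$ vanish), and your reduction to pairwise distinct $Z_j$ is really the implicit hypothesis under which the lemma holds at all, since repeated $Z_j$ with unequal $\beta_j$ make \eqref{syst} unsolvable; this matches how the paper invokes the lemma with distinct $Z_1,\ldots,Z_J$ in the proof of the main theorem.
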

\begin{proof}
The proof proceeds by performing an elimination process on the collection of variables $v_{ik}$ according to 
an ordering that we now define. Let $J_i = \card(\{Z_j|i\in Z_j\})$ for $i=1,\ldots,D$. 
Define $J^0=\min\limits_i J_i$ and $i_0=\argmin\limits_i J_i$. 
If $K \geq J^0$ by Lemma \ref{elim} we can find a solution of the reduced system of equations
$$\sum_{k=1}^K \prod_{i \in Z_j} v_{i,k} = \beta_j\text{, }j\in\{j|i_0\in Z_j\},$$
while maintaining the linear independence needed to apply Lemma \ref{elim} again further. Now we know that we can find a solution for equations indexed by $\{j|i_0\in Z_j\}$. We remove them from system \eqref{syst} and recompute $J^1=\min\limits_{i\neq i_0} J_i$ and $i_1=\argmin\limits_{i\neq i_0} J_i$ to apply Lemma \ref{elim} again. Iteratively we will remove all the equations, meaning that there is at least one solution. Note that $J_i$ are decreasing since whenever we remove equations, number of $Z_j$s containing certain $i$ can only decrease. Therefore, we will need $K\geq K_0 := 
\max(J^0, J^1, \ldots, 0)$ in order to apply Lemma \ref{elim} on every elimination step.
\end{proof}
From the proof of Lemma~\ref{elim2}, it can be observed that $K_0 = \max(J^0,J^1,\ldots) 
 \ll J$ when we anticipate only few interactions 
per variable, whereas the upper bound $K_0=J-1$ is attained when there are only $(D-1)$-way interactions. 
%Indeed, under the prior $Z\thicksim \text{IBP}(\gamma)$, $\gamma$ is the expected number of variable occurrences 
%across interactions.
Now we are ready to present a proof of the main theorem. 

\begin{proof} (of main theorem).
By Lemma \ref{elim2} and the fact that the probability of a finite number of independent continuous random 
vectors being linearly dependent is 0 
%(corresponding determinant has continuous distribution and therefore probability of it to be equal to 0 is 0), 
it follows that under the MiFM prior on $V$ as in \eqref{ifm_proof} 
and $\forall \beta_1,\ldots,\beta_{J} \in \mathbb{R}\setminus\{0\}$, distinct $Z_1,\ldots,Z_{J}$ and $\epsilon>0$
\begin{equation}
\label{step2}
\Pi\left(\sum_{j=1}^{J}(\beta_j - \sum_k \prod_{i \in Z_j} v_{ik})^2 < \epsilon\,| Z_1,\ldots,Z_{J}\right) > 0.
\end{equation}
From Eq. \eqref{nex} it follows that for any $Z_1,\ldots,Z_{J}$, the prior probability of the corresponding incidence matrix is bounded away from 0. Combining this with \eqref{step2}, we now establish that the probability of the true model parameters to be arbitrary close to the MiFM parameters under the MiFM prior as in \eqref{ifm_proof}:
\begin{equation}
\label{step3}
\Pi\left((\sum_{j=1}^{J}\beta_j - \sum_{j=1}^J\sum_k \prod_{i \in Z_j} v_{ik})^2 < \epsilon\right) > 0,\,\forall \epsilon>0.
\end{equation}
We shall appeal to Schwartz's theorem (cf. 
\cite{ghosal1999posterior}), which asserts that the desired posterior consistency holds 
as soon as we can establish that the true joint distribution $P^*(X,Y)$ lies in the Kullback-Leibler
support of the prior $\Pi$ on the joint distribution $P(X,Y)$. That is, 
\begin{equation}
\label{kl}
\Pi\left(\text{KL}(P^*||P)<\epsilon\right)>0,\text{ for }\forall \epsilon>0.
\end{equation}
Since the KL divergence of the two Gaussian distributions is proportional
to the mean difference, we have ($\mathbb{E}_X^*$ denotes expectation with respect to
the true marginal distribution of $X$)
\begin{eqnarray}
\label{gaus_kl}
\left.\begin{aligned}
& \text{KL}(P^*||P) \propto \mathbb{E}_X^* \frac{1}{2}(y(X,\Theta)-y(X,\Theta^*))^2 \propto \\
& \mathbb{E}_X^* (\sum_{j=1}^{J} \beta_j\prod_{i\in Z_j}x_{i} - \sum_{j=1}^{J}\sum_k \prod_{i \in Z_j} v_{ik}x_{i})^2 \lesssim (\sum_{j=1}^{J}\beta_j - \sum_{j=1}^J\sum_k \prod_{i \in Z_j} v_{ik})^2.
\end{aligned}\right.
\end{eqnarray}
Due to \eqref{step3} this quantity can be made arbitrarily close to 0 with positive probability.
Therefore \eqref{kl} and then Schwartz theorem hold, which concludes the proof.
\end{proof}

\subsection{Analyzing FFM$_\alpha$}
\label{ffm_all}
\subsubsection{Model definition and exchangeability}
Here we remind the reader the construction of FFM$_\alpha$ --- the distribution over finite collection of binary random variables that we used to model interactions. Let $D$ be the number of variables in the data and $Z\in \{0,1\}^D$ is $j$-th interaction (subscript $j$ is dropped to simplify notation). Let $\sigma(\cdot)$ be a random uniform permutation of $\{1,\ldots,D\}$ and let $\sigma_1 = \sigma^{-1}(1),\ldots,\sigma_D = \sigma^{-1}(D)$. Note that $\sigma_1,\ldots,\sigma_D$ are discrete random variables and $\P(\sigma_k = i)=1/D$ for any $i,k = 1,\ldots,D$. Next recall FFM$_\alpha$ from Eq. \eqref{nex}:
\begin{eqnarray*}
&\P(Z_{\sigma_i}=1|Z_{\sigma_1},\ldots,Z_{\sigma_{i-1}}) = \frac{\alpha M_{i-1} + (1-\alpha)(i-1-M_{i-1})+\gamma_1}
{i-1+\gamma_1+\gamma_2},\nonumber\\
&\P(Z_{\sigma_i}=0|Z_{\sigma_1},\ldots,Z_{\sigma_{i-1}}) = \frac{(1-\alpha)M_{i-1} + \alpha(i-1-M_{i-1})+\gamma_2}
{i-1+\gamma_1 + \gamma_2},
\end{eqnarray*}
where $\gamma_1>0,\gamma_2>0,\alpha \in [0,1]$ are given parameters and $M_i = Z_{\sigma_1} + \ldots + Z_{\sigma_i}$. Due to the random permutation of indices, distribution of $Z_1,\ldots,Z_D$ is exchangeable because any ordering of variables entering the process has same probability. Next, we need to integrate the permutation part out to obtain a tractable full conditional representation.

\subsubsection{Gibbs sampling for FFM$_\alpha$ and distribution of interaction depths $M_D$}
\label{sample_fmm}
To construct a Gibbs sampler for the the FFM$_\alpha$ we will use an additional latent variable - index of the variable entering the process last, $\sigma_D$. Additionally observe that when permutation is integrated out $\P(Z_1,\ldots, Z_D)=\P(M_D=Z_1+\ldots + Z_D)$ since $\P(M_D=m)$ is precisely the summation over all possible orderings of $Z_1,\ldots,Z_D$ such that $Z_1 + \ldots + Z_D = m$.
\begin{eqnarray}
\label{g_last}
\left.\begin{aligned}
&\P(\sigma_D = i|Z_1,\ldots, Z_D)\propto \\
& Z_i\P(\sigma_D = i|Z_{\sigma_D}=1,Z)\P(Z_{\sigma_D}=1|M_{D-1}=\sum_{k=1}^D Z_k - 1)\P(M_{D-1}=\sum_{k=1}^D Z_k - 1) + \\
& + (1-Z_i)\P(\sigma_D = i|Z_{\sigma_D}=0,Z)\P(Z_{\sigma_D}=0|M_{D-1}=\sum_{k=1}^D Z_k)\P(M_{D-1}=\sum_{k=1}^D Z_k),
\end{aligned}\right.
\end{eqnarray}
then if $Z_i=1$ and $\sum_{k=1}^D Z_k = m$ we obtain
%\begin{eqnarray}
%\label{last_1}
%\left.\begin{aligned}
%& \P(\sigma_D = i|Z_{-i},Z_i=1) = \P(\sigma_D = i| M_D=m, Z_i=1) =  \\
%& \frac{\P(M_{D-1} = m - 1)\P(Z_{\sigma_D}=1|M_{D-1}=m-1)/m}{\P(M_{D-1} = m - 1)\P(Z_{\sigma_D}=1|M_{D-1}=m-1)+\P(M_{D-1} = m)\P(Z_{\sigma_D}=0|M_{D-1}=m)},
%\end{aligned}\right.
%\end{eqnarray}
\begin{eqnarray}
\label{last_1}
\left.\begin{aligned}
 \P(\sigma_D = i|Z_{-i},Z_i=1) & = \P(\sigma_D = i| M_D=m, Z_i=1) =  \\
& = \frac{\P(M_{D-1} = m - 1)\P(Z_{\sigma_D}=1|M_{D-1}=m-1)}{m\P(M_{D} = m)},
\end{aligned}\right.
\end{eqnarray}
where $\P(Z_{\sigma_D}=1|M_{D-1}=m-1)$ and $\P(Z_{\sigma_D}=0|M_{D-1}=m)$ can be computed as in Eq. \ref{nex}. Our next step is to analyze probability $\P(M_D = m)$. Indeed it is easy to obtain this distribution recursively:
\begin{eqnarray}
\label{mn_rec}
\left.\begin{aligned}
\P(M_D = m) & = \P(M_{D-1}=m)\P(Z_{\sigma_D}=0|M_{D-1}=m) + \\ &+ \P(M_{D-1}=m-1)\P(Z_{\sigma_D}=1|M_{D-1}=m-1).
\end{aligned}\right.
\end{eqnarray}
The base of recursion is given by the following identities:
\begin{eqnarray}
\label{mn_rec_base}
\left.\begin{aligned}
& \P(M_0 = 0) = 1, \\
& \P(M_i = 0) = \prod_{k=0}^{i-1}\frac{\alpha(i-1-k) + \gamma_2}{k+\gamma_1+\gamma_2} = \prod_{k=0}^{i-1}\frac{\alpha k + \gamma_2}{k+\gamma_1+\gamma_2},\\
& \P(M_i = i) = \prod_{k=0}^{i-1}\frac{\alpha k + \gamma_1}{k+\gamma_1+\gamma_2}.
\end{aligned}\right.
\end{eqnarray}

The above formulation allows us compute $\P(M_i=k), D\geq i\geq k$ dynamically (computations are very fast since we only need to perform $\frac{(D+1)(D+2)}{2}-1$ calculations) \emph{before} running MiFM inference and utilize the table of probabilities during it. The last step of the Gibbs sampler is clearly the update of the $Z_i|\sigma_D=i,Z_{-i}$ which is done simply using the FFM$_\alpha$ definition \ref{nex}. Recall Figure 1 (a) of the main text which illustrates
the behavior of
\begin{eqnarray*}
\left.\begin{aligned}
\sum_{i:Z^{(k)}_i = 0}\P(Z^{(k+1)}_i = 1, \sigma_D = i| Z^{(k)}) = \P(Z_{\sigma_D}=0|Z)\P(Z_i=1|\sigma_D=i,Z_{-i}),
\end{aligned}\right.
\end{eqnarray*}
and since we choose index of a variable to update based on the probability of it being last, the expression above reads as the probability that we choose to update a variable not present in the interaction and then add it to the interaction, therefore increasing the depth of the interaction.

\subsubsection{Mean Behavior of the FFM$_\alpha$}
From Eq. \eqref{mn_rec} it follows that
%Given the conditional specification of the FFM$_\alpha$, we have
%\begin{eqnarray*}
%\left.\begin{aligned}
% \P(M_D = m) & = \P(M_{D-1}=m)\frac{(1-\alpha)m + \alpha(D-1-m) + \gamma_2}
%{D-1+\gamma_1+\gamma_2} + \\
%&  +\P(M_{D-1}=m-1) \frac{\alpha(m-1) + (1-\alpha)(D-m) + \gamma_1}
%{D-1+\gamma_1+\gamma_2}.
%\end{aligned}\right.
%\end{eqnarray*}
%It follows that
%
\begin{eqnarray}
\label{mean}
\left.\begin{aligned}
& \E M_D = \sum_{m=0}^{D}m\P(M_D = m) = \\
& = \frac{1}{D-1+\gamma_1+\gamma_2} \biggr \{
(1-2\alpha)\E M_{D-1}^2 
+ (\alpha(D-1)+ \gamma_2) \E M_{D-1} + \\
& + (2\alpha-1) \E (M_{D-1}+1)^2 
+ ((1-\alpha)D - \alpha + \gamma_1)\E(M_{D-1}+1) \biggr \} \\
& = \frac{1}{D-1+\gamma_1+\gamma_2}
\biggr \{\E M_{D-1}(D+2\alpha+\gamma_1+\gamma_2-2) +D(1-\alpha)+\alpha + \gamma_1-1 \biggr \}.
\end{aligned}\right.
\end{eqnarray}
For $\alpha = 0$, this relation is simplified to be
\begin{eqnarray}
\label{mean0}
\left.\begin{aligned}
(D-1+\gamma_1+\gamma_2) \E M_D  & = \E M_{D-1}(D+\gamma_1+\gamma_2-2)  + (D+\gamma_1 -1) = \\
& = (D+\gamma_1-1) + \ldots + \gamma_1 = \frac{1}{2}D(D+2\gamma_1-1).
\end{aligned}\right.
\end{eqnarray}

\subsection{Gibbs Sampler for the MiFM}
\label{supp_gibbs}
Our Gibbs sampling algorithm consists of two parts --- updating factorization coefficients $V$ (based on the results from \citet{freudenthaler2011bayesian}) and then updating interactions $Z$ based on the analysis of Section \ref{sample_fmm}. Recall the MiFM model construction. First we have a layer of hyperpriors:
\begin{align*}
& \sigma\thicksim\Gamma(\frac{\alpha_1}{2},\frac{\beta_1}{2}),
\qquad
\lambda\thicksim\Gamma(\frac{\alpha_0}{2},\frac{\beta_0}{2}),
\qquad
\mu\thicksim\mathcal{N}(\mu_0, \frac{1}{\gamma_0}),
\\
&\lambda_k\thicksim\Gamma(\frac{\alpha_0}{2},\frac{\beta_0}{2}),\
\mu_k\thicksim\mathcal{N}(\mu_0, \frac{1}{\gamma_0})\text{ for }k=1,\ldots,K,
\end{align*}
Then interactions and their weights:
\begin{align*}
& w_i|\mu, \lambda\thicksim\mathcal{N}(\mu, \frac{1}{\lambda})\text{ for }i=0,\ldots,D,
\qquad
Z\thicksim\text{FFM}_\alpha(\gamma_1,\gamma_2),\\
& v_{ik}|\mu_k, \lambda_k \thicksim\mathcal{N}(\mu_k, \frac{1}{\lambda_k})\text{ for }i=1,\ldots,D;\,k=1,\ldots,K,
\end{align*}
And finally the model's likelihood from Eq. \eqref{ifm_y}
\begin{eqnarray*}
\left.\begin{aligned}
& y_n|\Theta\thicksim \mathcal{N}(y(x_n, \Theta),\frac{1}{\sigma})\text{, where}\\
& y(x, \Theta):= w_0 + \sum_{i=1}^Dw_ix_{i} + \sum_{j=1}^J\sum_{k=1}^K\prod_{i\in Z_j}x_i v_{ik},\\
& \text{for }n=1,\ldots,N,\text{ and }\Theta=\{Z,V,\sigma,w_{0,\ldots,D}\}.
\end{aligned}\right.
\end{eqnarray*}
Inference in the context of Bayesian modeling is often related to learning the posterior distribution $\P(\Theta|X,Y)$. Then, if one wants point estimates, certain statistics of the posterior can be used, i.e. mean or median. In most situations (including MiFM) analytical form of the posterior is intractable, but with the help of Bayes rule it is often possible to compute it up to a proportionality constant:
\begin{eqnarray}
\label{prop_ll}
\left.\begin{aligned}
&\P(\Theta,\mu,\gamma,\mu_1,\ldots\mu_K,\lambda_1,\ldots,\lambda_K|Y) \propto \prod_{n=1}^N \P(y_n|Z,V,\sigma,w_{0,\ldots,D})\cdot\\
&\cdot\P(Z)\P(V|\mu_1,\ldots,\mu_K, \lambda_1,\ldots,\lambda_K)\P(\sigma,\mu,\gamma,\mu_1,\ldots\mu_K,\lambda_1,\ldots,\lambda_K).
\end{aligned}\right.
\end{eqnarray}
One can maximize this quantity to obtain MAP estimate, but this is very complicated due to the combinatorial complexity of interactions in $Z$ and, additionally, often leads to overfitting. We use Gibbs sampling procedure for learning the posterior of our model. Due to normal-normal conjugacy and a priori independence of $Z$ and other latent variables, we can derive closed form full conditional (i.e. variable given all the rest and the data) distributions for each of the latent variables in the model.\\
\paragraph{Updating hyperprior parameters}
\begin{eqnarray}
&\sigma \thicksim \Gamma\left(\frac{\alpha_1 + N}{2};\, \frac{\sum_{n=1}^N(y_n - y(x_n,\Theta))^2 + \beta_1)}{2}\right),\\
& \lambda \thicksim \Gamma\left(\frac{\alpha_0 + D + 1}{2};\, \frac{\sum_{i=0}^D(w_i - \mu)^2 + \beta_0}{2}\right),\\
& \mu \thicksim \mathcal{N}\left(\frac{\sum_{i=0}^D w_i + \gamma_0\mu_0}{D+1+\gamma_0};\, \frac{1}{\lambda(D+1+\gamma_0)}\right),\\
& \lambda_k \thicksim \Gamma\left(\frac{\alpha_0 + D}{2};\, \frac{\sum_{i=1}^D(v_{ik} - \mu_k)^2 + \beta_0}{2}\right),\\
& \mu_k \thicksim \mathcal{N}\left(\frac{\sum_{i=1}^D v_{ik} + \gamma_0\mu_0}{D+\gamma_0};\, \frac{1}{\lambda_k(D+\gamma_0)}\right),\\
& \text{for }k=1,\ldots ,K.\nonumber
\end{eqnarray}
\paragraph{Updating factorization coefficients $V$}
For updating coefficients of the model we can utilize the multi-linear property also used for the Factorization Machines MCMC updates \citep{freudenthaler2011bayesian}. Note that for any $\theta \in \{w_0,\ldots, w_D, v_{11},\ldots, v_{DK}\}$ we can write $y(x,\Theta) = l_\theta(x) + \theta m_\theta(x)$, where $l_\theta(\cdot)$ are all the terms independent of $\theta$ and $m_\theta(\cdot)$ are the terms multiplied by $\theta$. For example, if $\theta=w_0$, then $m_\theta(x)=1$ and $l_\theta(x)=\sum_{i=1}^Dw_ix_{i} + \sum_{j=1}^J\sum_{k=1}^K\prod_{i\in Z_j}x_i v_{ik}$. Next we give updating distribution that can be used for any $\theta \in \{w_0,\ldots, w_D, v_{11},\ldots, v_{DK}\}$.
\begin{eqnarray}
\label{mult_theta}
\left.\begin{aligned}
&\theta \thicksim \mathcal{N}(\mu^*_\theta, \sigma_\theta^2),\text{where }\sigma_\theta^2 = \left(\sigma\sum_{n=1}^N m_\theta(x_n)^2 + \lambda_\theta\right)^{-1},\\
& \mu^*_\theta = \sigma_\theta^2\left(\sigma\sum_{n=1}^N (y_n - l_\theta(x_n))m_\theta(x_n) + \mu_\theta\lambda_\theta\right),
\end{aligned}\right.
\end{eqnarray}
and $\mu_\theta,\lambda_\theta$ are the corresponding hyperprior parameters.
\paragraph{Updating interactions $Z$}
Posterior updates of $Z$ can be decomposed into prior times the likelihood:
\begin{equation}
\P(Z_i|Z_{-i},V,Y) \propto \P(Z_i|Z_{-i})\P(Y|V,Z),
\end{equation}
where second part is the Gaussian likelihood as in Eq. \eqref{ifm_y}. To sample $Z_i|Z_{-i}$ we use the construction from Section \ref{ffm_all}, where we first sample the value of $Z_{\sigma_D}$ for fixed $j$:
\begin{eqnarray}
\left.\begin{aligned}
 \P(Z_{\sigma_D}=1|Z) & = \P(\sigma_D = i| M_D=m, Z_i=1) =  \\
& = \frac{\P(M_{D-1} = m - 1)\P(Z_{\sigma_D}=1|M_{D-1}=m-1)}{\P(M_{D} = m)},
\end{aligned}\right.
\end{eqnarray}
and then uniformly choose and index $i$ to update among $\{i:Z_i=Z_{\sigma_D}\}$. Next $Z_i$ can simply be updated using the process construction \ref{nex} assuming it to be last. Recall that $\P(M_{D} = m)$ should be computed beforehand using Eq. \eqref{mn_rec}.

\clearpage
\small
\bibliography{MY_ref}
\bibliographystyle{icml2017}

\end{document}